\newtheorem{lemma}{Lemma}
\newtheorem{theorem}{Theorem}
\newtheorem{corollary}{Corollary}
\newtheorem{assumption}{Assumption}
\newtheorem{remark}{Remark}
\DeclareMathOperator{\conv}{conv} 
\DeclareMathOperator{\diag}{diag} 
\DeclareMathOperator{\argmax}{argmax} 
\DeclareMathOperator{\argmin}{argmin}
\DeclareMathOperator{\col}{col}
\title{Semidefinite Programming Based Preconditioning for More Robust Near-Separable Nonnegative Matrix Factorization\thanks{This work was supported in part by a grant from the U.S. Air Force Office of Scientific Research and a Discovery Grant from the Natural Science and Engineering Research Council (Canada).}}
\date{}
\author{Nicolas Gillis \\ % \thanks{xxx} 
Facult\'e Polytechnique, Universit\'e de Mons \\ 
Department of Mathematics and Operational Research \\ 
Rue de Houdain 9, 7000 Mons, Belgium\\
Email: nicolas.gillis@umons.ac.be 
 \and 
 Stephen A. Vavasis  \\ % \thanks{xxx} 
University of Waterloo \\ 
Department of Combinatorics and Optimization \\
Waterloo, Ontario N2L 3G1, Canada\\
Email: vavasis@math.uwaterloo.ca}
\begin{document}

\maketitle

\begin{abstract} 
Nonnegative matrix factorization (NMF) under the separability assumption can provably be solved efficiently, even in the presence of noise, and has been shown to be a powerful technique in document classification and hyperspectral unmixing. This problem is referred to as near-separable NMF and requires that there exists a cone spanned by a small subset of the columns of the input nonnegative matrix approximately containing all columns. In this paper, we propose a preconditioning based on semidefinite programming making the input matrix well-conditioned. This in turn can improve significantly the performance of near-separable NMF algorithms which is illustrated on the popular successive projection algorithm (SPA). The new preconditioned SPA is provably more robust to noise, and outperforms SPA on several synthetic data sets. We also show how an active-set method allow us to apply the preconditioning on large-scale real-world hyperspectral images.  

%Our technique requires to solve a semidefinite program hence can be computationally too expensive for large-scale data sets. 
%However, using linear dimensionality reduction techniques and active-set methods, we show how to apply our technique on the effectiveness of the approach on several synthetic and real-world data sets. 
\end{abstract} 

\textbf{Keywords.} nonnegative matrix factorization, semidefinite programming, preconditioning, separability, robustness to noise.

\section{Introduction}

Nonnegative matrix factorization (NMF) has become a standard technique in machine learning and data analysis. NMF is a linear dimensionality reduction technique for nonnegative data where both the basis elements and the weights of the linear combinations are imposed to be nonnegative: 
Given an $m$-by-$n$ nonnegative matrix $M$ and a factorization rank $r$, 
NMF produces an $m$-by-$r$ nonnegative matrix $W$ and an $r$-by-$n$ nonnegative matrix $H$ such that $WH \approx M$. 
The columns of $M$, which usually represent elements of a data set such as images or documents, are approximately reconstructed using nonnegative linear combinations of the columns of $W$ since $M(:,j) \approx \sum_{k=1}^r W(:,k) H(k,j)$ for all $j$. 
The advantage of the nonnegativity constraints is twofold. 
First, the nonnegativity of the basis elements (that is, the columns of $W$) allows us to interpret them in the same way as the data (e.g., as images or documents). 
Second, the nonnegativity of the weights in the linear combinations only allows an additive reconstruction of the data points from the basis elements leading to a parts-based and sparse representation of the data~\cite{LS99}. 

Unfortunately, NMF is NP-hard \cite{V09} and highly ill-posed \cite{G12}. Therefore, in practice, people usually use standard nonlinear optimization techniques to find locally optimal solutions. Hence most NMF algorithms come with no guarantee. However, NMF algorithms have been proved successful in a series of applications which suggests that some real-world NMF problems might not be as difficult as the general NMF problem. In fact, it was recently shown by Arora et al.~\cite{AGKM11} that there exists a subclass of nonnegative matrices, referred to as separable, for which the NMF problem can be solved in polynomial time. Separability requires that there exists an NMF $(W,H)$ of the input matrix $M$ such that $M = WH$ and where each column of $W$ is equal to a column of $M$. 
In other terms, a matrix $M$ is $r$-separable if there exists an index set $\mathcal{K} \subseteq \{1,2,\dots,n\}$ with cardinality $r$ and an $r$-by-$n$ nonnegative matrix $H$ such that $M = M(:,\mathcal{K}) H$. 
This is equivalent to requiring the cone spanned by a subset of $r$ columns of $M$ to contain all columns of $M$. Although this condition is rather strong, it makes sense in several applications. For example, 
\begin{itemize}

\item In document classification, separability requires that for each topic there exists a word used only by that topic (it is referred to as an `anchor' word) \cite{AGM12, Ar13}. 

\item In hyperspectral unmixing, separability requires that for each constitutive material present in the hyperspectral image there exists a pixel containing only that material (it is referred to as a `pure' pixel). The separability assumption coincides with the so called pure-pixel assumption; see Section~\ref{hubsec}.  

\item In blind source separation, separability requires that for each source there exists a moment in time where only that source is active; see, e.g., \cite{CMCW08, CMCW11} and the references therein. 

\end{itemize}

\subsection{Near-Separable NMF} \label{nsnsec}

In practice, the input separable matrix is perturbed with some noise and it is important to design algorithms robust to noise. Note that, in the noiseless case, the separable NMF problem is relatively easy and reduces to identify the extreme rays of the cone spanned by a finite set of points (or the vertices of the convex hull of a set of points after normalization of the columns of input matrix); see, e.g., \cite{KSK12}.  
The separable NMF problem with noise is referred to as near-separable NMF, and can be defined as follows \cite{GL13}. 
\begin{quote}
(Near-Separable NMF) \emph{Given a noisy $r$-separable matrix $\tilde{M} = M + N$ with $M = WH = W[I_r, H'] \Pi$ where $W$ and $H'$ are nonnegative matrices, $I_r$ is the $r$-by-$r$ identity matrix, $\Pi$ is a permutation matrix and $N$ is the noise with $\max_j ||N(:,j)||_2 \leq \epsilon$ for some $\epsilon \geq 0$, find a set $\mathcal{K}$ of $r$ indices such that $\tilde{M}(:,\mathcal{K}) \approx W$. }
\end{quote} 
Several algorithms have been proposed to solve this problem \cite{AGKM11, AGM12, BRRT12, G13, GV12}, and all these algorithms are sensitive to the conditioning of the matrix $W$. 

In this paper, we will assume without loss of generality that the columns of $H$ sum to at most one (this can be obtained by normalizing the columns of the input matrix; see, e.g., \cite{AGKM11}). 
More precisely we assume that the input data matrix has the following form. 
\begin{assumption}[Near-Separable Matrix \cite{GV12}] \label{ass1} 
The separable noiseless matrix $M$ can be written as $M = W  \, H \in \mathbb{R}^{m \times n}$ where $W \in \mathbb{R}^{m \times r}$ has full column rank, $H = [I_r \; H'] \in \mathbb{R}^{r \times n}_+$ and the sum of the entries of each column of $H'$ is at most one.  
%and $||w_1||_2 \geq ||w_2||_2 \geq \dots \geq ||w_k||_2$ (reordering). 
The near-separable matrix $\tilde{M} = M + N$ is the perturbed matrix $M$, with %whose columns also sum to one (the columns of $N$ sum to zero), and   
\[
||N(:,j)||_2 = ||\tilde{M}(:,j)-M(:,j)||_2 \leq \epsilon \quad \text{ for all } j. 
\] 
\end{assumption} 
Finally, given a matrix satisfying {Assumption}~\ref{ass1} whose columns have been arbitrarily permuted, our aim is to automatically identify the columns of $M$ corresponding to the columns of $W$.

\subsection{Successive Projection Algorithm (SPA)} \label{spasec} 

The successive projection algorithm (SPA; see Algorithm~\ref{spa}) is a simple but fast and robust recursive algorithm for solving near-separable NMF: at each step, the column with maximum $\ell_2$ norm is selected and then all columns are projected onto its orthogonal complement. 
It was first introduced in \cite{MC01}, and later proved to be robust in~\cite{GV12}. 
\begin{theorem}[\cite{GV12}, Th.~3] \label{th1} 
Let $\tilde{M}$ satisfy Assumption~\ref{ass1}. If 
$\epsilon \leq \mathcal{O} \left( \,  \frac{  \sigma_{\min}(W)  }{\sqrt{r} \kappa^2(W)} \right)$, then SPA identifies all the columns of $W$ up to error $\mathcal{O} \left( \epsilon \, \kappa^2(W) \right)$, that is, the index set $\mathcal{K}$ identified by SPA satisfies 
\[
\max_{1 \leq j \leq r} \min_{k \in \mathcal{K}} ||W(:,j) - \tilde{M}(:,k)||_2 \leq \mathcal{O} \left( \epsilon \, \kappa^2(W) \right), 
\]
where $\kappa(W) = \frac{\sigma_{\max}(W)}{\sigma_{\min}(W)}$ is the condition number of $W$. 
\end{theorem}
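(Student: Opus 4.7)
The plan is to proceed by induction on the iteration count $\ell = 1,\dots,r$ of SPA. Let $R^{(\ell)}$ denote the residual matrix at the start of iteration $\ell$, i.e., $\tilde{M}$ projected onto the orthogonal complement of the previously selected columns $\tilde{M}(:,k_1),\dots,\tilde{M}(:,k_{\ell-1})$. I aim to show by induction that at every iteration $\ell$, the column of $R^{(\ell)}$ of maximum $\ell_2$ norm corresponds to an index $k_\ell$ such that $\tilde{M}(:,k_\ell)$ is within $\mathcal{O}(\epsilon\,\kappa^2(W))$ of a not-yet-identified column of $W$. The base case $\ell=1$ and the inductive step share the same underlying geometric mechanism, so the work splits naturally into (i) a noiseless geometric lemma, (ii) a perturbation analysis of one SPA step, and (iii) control of the residual after deflation.

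For the noiseless geometric core, I would prove the following quantitative statement: for any $p = W\lambda$ with $\lambda\geq 0$ and $\|\lambda\|_1\leq 1$, if $\|p\|_2^2 \geq \max_j \|W(:,j)\|_2^2 - \delta$, then there exists an index $j^\ast$ with $\lambda_{j^\ast} \geq 1 - \mathcal{O}(\delta/\sigma_{\min}(W)^2)$, and therefore $\|p - W(:,j^\ast)\|_2 \leq \mathcal{O}(\delta/\sigma_{\min}(W))$. The derivation would expand $\|W\lambda\|_2^2 = \lambda^\top W^\top W \lambda$, compare it to the purely ``vertex'' choice $\lambda = e_{j^\ast}$, and extract the quadratic deficit from the Gram matrix; the lower bound on the curvature is exactly $\sigma_{\min}(W)^2$. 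Combined with $\|N(:,j)\|_2 \leq \epsilon$ and the triangle inequality on $\|\cdot\|_2^2$, maximizing the squared norm over the noisy columns instead of the true ones only perturbs $\delta$ by $\mathcal{O}(\epsilon \sigma_{\max}(W))$, which gives an error $\mathcal{O}(\epsilon\,\kappa(W))$ at a single step.

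For the inductive step, suppose indices $k_1,\dots,k_{\ell-1}$ have been selected, approximating distinct vertices $W(:,j)$ for $j \in \mathcal{J}$. The SPA projector $P_\ell$ projects onto the complement of $\mathrm{span}\{\tilde{M}(:,k_i)\}_{i<\ell}$, while the ``ideal'' projector $P^W_\ell$ projects onto the complement of $\mathrm{span}\{W(:,j)\}_{j\in\mathcal{J}}$. A standard $\sin\Theta$ / Wedin-type perturbation bound gives $\|P_\ell - P^W_\ell\|_2 \leq \mathcal{O}(\epsilon\,\kappa(W)/\sigma_{\min}(W))$, because the current approximation error is $\mathcal{O}(\epsilon\,\kappa^2(W))$ and the subspace being perturbed is spanned by vectors with smallest singular value at least $\sigma_{\min}(W)$. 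On the deflated problem, the noiseless residuals are again nonnegative combinations with column sums at most one of the vectors $\{P^W_\ell W(:,j)\}_{j\notin\mathcal{J}}$, whose smallest singular value remains at least $\sigma_{\min}(W)$ because orthogonal projection onto the complement of a subset of the columns preserves the singular values restricted to the remaining directions. Reapplying the geometric lemma to this projected system, and absorbing the perturbation of $P_\ell$ into the noise level, produces the next index $k_\ell$ within $\mathcal{O}(\epsilon\,\kappa^2(W))$ of a new column of $W$.

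The main obstacle I anticipate is ensuring that the errors do not accumulate multiplicatively across the $r$ iterations: a naive bound would pick up an extra factor of $\kappa(W)$ per step, yielding $\kappa^{r}(W)$ rather than $\kappa^2(W)$. Avoiding this requires (a) always comparing selected columns to the \emph{original} $W$ rather than to iterate-dependent projections, so that the relevant conditioning is that of $W$ itself rather than compounded deflations, and (b) using the geometric lemma's quadratic (as opposed to linear) growth to convert an $\mathcal{O}(\epsilon\sigma_{\max}(W))$ deficit in squared norm into an $\mathcal{O}(\epsilon\,\kappa(W))$ coefficient deviation. The $\sqrt{r}$ in the assumed bound on $\epsilon$ then enters as the price of requiring the perturbation of each of the $r$ subspace projections to remain small enough for the induction to proceed, i.e., to keep the scalar $\delta$ fed into the geometric lemma below the threshold at which vertex identification is guaranteed.
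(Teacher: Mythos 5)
First, a point of orientation: this paper does not prove Theorem~\ref{th1} at all --- it is imported verbatim from \cite{GV12} (Theorem~3 there), so there is no in-paper proof to compare against. Judged against the original argument in that reference, your sketch follows essentially the same strategy: a robust ``strong convexity'' lemma showing that any near-maximizer of $\|\cdot\|_2^2$ over $\conv(\{0, w_1,\dots,w_r\})$ must be close to a vertex, combined with an induction over the deflation steps in which one tracks (i) that $\sigma_{\min}$ of the projected, not-yet-extracted columns of $W$ does not drop below $\sigma_{\min}(W)$ (your observation that projecting onto the orthogonal complement of a subset of the columns preserves this lower bound is correct, and is indeed a key lemma of the original proof), and (ii) that the effective noise level of the deflated problem grows in a controlled way.

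Two points deserve scrutiny. First, the conclusion $\|p - W(:,j^\ast)\|_2 \leq \mathcal{O}(\delta/\sigma_{\min}(W))$ of your geometric lemma is optimistic: the strong-convexity argument yields $\lambda_{j^\ast} \geq 1 - \mathcal{O}(\delta/\sigma_{\min}^2(W))$ and hence $\|p - W(:,j^\ast)\|_2 \leq \mathcal{O}\bigl(\delta\,\sigma_{\max}(W)/\sigma_{\min}^2(W)\bigr)$; with $\delta = \mathcal{O}(\epsilon\,\sigma_{\max}(W))$ this already gives $\mathcal{O}(\epsilon\,\kappa^2(W))$ at the very first step --- consistent with the theorem, but not with the $\mathcal{O}(\epsilon\,\kappa(W))$ per-step error you claim. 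Second, and more substantively, the error-propagation bookkeeping that you correctly single out as ``the main obstacle'' is precisely the content of the proof in \cite{GV12}, and your sketch asserts rather than establishes that it works out: once each extracted column carries an $\mathcal{O}(\epsilon\,\kappa^2(W))$ error, the perturbation of the deflation projector injects an effective noise of order $\epsilon\,\kappa^2(W)\,\sigma_{\max}(W)/\sigma_{\min}(W)$ into the next step, and naively reapplying the lemma then loses a factor of $\kappa(W)$ per iteration --- exactly the $\kappa^{r}(W)$ blow-up you mention. The resolution requires a quantitative induction hypothesis on the residual columns whose validity is exactly what consumes the assumption $\epsilon \leq \mathcal{O}\bigl(\sigma_{\min}(W)/(\sqrt{r}\,\kappa^2(W))\bigr)$; your proposed remedies (a) and (b) point in the right direction but do not carry out this step, so the sketch as written has a genuine, though well-identified, gap at its core.
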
 

For SPA, the condition $\epsilon < {\sigma_{\min}(W)}$ is necessary, otherwise $W$ could become rank deficient (take for example  $W = \sigma I_{r}$ and the noise corresponding to the columns of $W$ as $-\sigma I_{r}$). %Hence the near-separable NMF problem would not really make sense since extracting some zero columns would solve the problem. 
The dependence in terms of the condition number $\kappa(W)$ of $W$ in Theorem~\ref{th1} is quite strong: 
for ill-conditioned matrices, $\epsilon$ has to be very close to zero to guarantee recovery. 
%Note however that, in practice, SPA often performs well \cite{GV12, Ar13} (in fact, the above robustness result considers a worst-case scenario). 
\algsetup{indent=2em}
\begin{algorithm}[ht!]
\caption{Successive Projection Algorithm (SPA) \cite{MC01, GV12} \label{spa}}
\begin{algorithmic}[1] 
\REQUIRE Near-separable matrix $M = WH + N \in \mathbb{R}^{m \times n}_+$ (see Assumption~\ref{ass1}),  the number $r$ of columns to be extracted. 
\ENSURE Set of indices $\mathcal{K}$ such that $M(:,\mathcal{K}) \approx W$ (up to permutation). 
    \medskip 
%\STATE Remove the zero columns from $M$. 
%\STATE Set the $\ell_1$-norm of the columns of $M$ to one. 
\STATE Let $R = M$, %$I = \{1,2,\dots,n\}$, 
$\mathcal{K} = \{\}$, $k=1$.  
\WHILE {$R \neq 0$ and $k \leq r$}   
\STATE $k^* = \argmax_k ||R_{:k}||_2$.  %$\dagger$
\STATE $u_k = {R_{:k^*}}$.  \vspace{0.1cm} 
%\IF{$j = 1$} 
%\STATE $R_{:i} \leftarrow R_{:i} - u_j$ for $i = 1, 2, \dots, n$. 
%\ELSE
\STATE $R \leftarrow \left(I-\frac{u_k u_k^T}{||u_k||_2^2}\right)R$. \vspace{0.1cm} 
%\ENDIF 
\STATE $\mathcal{K} = \mathcal{K} \cup \{k^*\}$. 
\STATE $k = k+1$.
\ENDWHILE
\end{algorithmic}
%$\dagger$ In case of a tie, we pick an index $j$ such that the corresponding column of the original matrix $M$ maximizes $f$.  
\end{algorithm} 

\begin{remark} \label{rem1} 
In Theorem~\ref{th1}, $\kappa(W)$ can actually be replaced with 
$\beta(W) = \frac{\max_i ||W(:,i)||_2}{\sigma_{\min}(W)}$; see \cite{GV12}. 
We choose to work here with $\kappa(W)$ instead of $\beta(W)$  because it is more convenient and makes the presentation nicer. 
Note that $\beta(W) \leq  \kappa(W)  \leq  \sqrt{r}   \beta(W)$ 
%since $\max_i ||W(:,i)||_2 \leq \sigma_{\max}(W) \leq \sqrt{r} \max_{1 \leq k \leq r} ||W(:,k)||_2$ 
so that $\beta(W)$ and $\kappa(W)$  only differ by a factor of at most $\sqrt{r}$ which is usually negligible (in practice, $r$ is in general smaller than 100). 
%The bounds are only in terms of $\gamma = \Omega\left( \sigma_{\min}(W) \right)$ because the matrix is assumed to have its columns normalized hence so that $||W(:i)||_1 = ||W(:,j)||_1 = 1$ for all $i,j$ hence $\gamma = \mathcal{O}\left( \sqrt{r} \sigma_{\min}(W) \right)$. 
\end{remark} 

\begin{remark}[Noise Model] \label{rem2} 
Note that the bounds in Theorem~\ref{th1} hold for any bounded noise (but otherwise it can be arbitrary). Moreover, SPA can generalized using other  convex functions for the selection step (step 3 of Algorithm~\ref{spa}), which might improve robustness of SPA depending on the noise model. For example, using $\ell_p$ norms for $1 < p < 2$ (instead of $p = 2$) makes SPA more robust to sparse noise~\cite{GV12}; see also~\cite{AC11} for some numerical experiments.  
\end{remark}

In~\cite{Ar13}, Arora et al.\@ proposed FastAnchorWords, an algorithm closely related to SPA: at each step, instead of picking the column whose projection onto the orthogonal complement of the columns extracted so far has maximum norm, they pick the column whose projection onto the affine hull of the columns extracted so far has maximum norm. 
This variant requires the entries of each column of $H$ to sum to one so that it is less general than SPA\footnote{For example, in hyperspectral imaging, the input matrix usually satisfies the assumption that the entries of each column of $H$ sum to at most one (Assumption~\ref{ass1}), but not to one (for example because of different illumination conditions in the image). Therefore, in this case, FastAnchorWords requires normalization of the input matrix (while SPA does not) and will be rather sensitive to columns of $M$ with small norms (e.g., background pixels); see the discussion in \cite{GV12}.}. 
However, their analysis extends to SPA\footnote{\label{footnote1} In fact, adding the origin in the data set and applying FastAnchorWords by imposing that the origin (which is a vertex) is extracted first makes FastAnchorWords and SPA equivalent.}, %while the analysis of FastAnchorWords only requires that a vertex is approximately revoered at each step.}, 
and they proposed a post-processing to make it more robust: 
Let $\mathcal{K}$ be the set of indices extracted by SPA, and denote $\mathcal{K}(k)$ the index extracted at step $k$. For $k = 1, 2, \dots r$, the post-processing 
\begin{itemize}
\item Projects each column of the data matrix onto the orthogonal complement of $M(:,\mathcal{K} \backslash \{\mathcal{K}(k)\})$. 
\item Identifies the column with maximum norm of the corresponding projected matrix (say the $k'$th), 
\item Updates $\mathcal{K} \, \leftarrow \, \mathcal{K} \backslash \{\mathcal{K}(k)\} \cup \{k'\}$. 
\end{itemize}
See Algorithm~4 in \cite{Ar13} for more details.  
\begin{theorem}[\cite{Ar13}, Th.~4.3] \label{th2} 
Let $\tilde{M}$ satisfy Assumption~\ref{ass1}. If 
$\epsilon \leq \mathcal{O} \left(  \frac{ \sigma_{\min}(W) }{r  \kappa^2(W)} \right)$, then post-processed SPA identifies all the columns of $W$ up to error $\mathcal{O} \left( \epsilon \, \kappa(W) \right)$. 
\end{theorem}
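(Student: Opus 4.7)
The plan is to bootstrap off Theorem~\ref{th1} and then exploit the leave-one-out projections of the post-processing step to shrink the amplification factor from $\kappa^2(W)$ to $\kappa(W)$. Since the hypothesis of Theorem~\ref{th2} implies (up to constants) that of Theorem~\ref{th1}, vanilla SPA already produces an index set $\mathcal{K}=\{\mathcal{K}(1),\dots,\mathcal{K}(r)\}$ together with a bijection $\pi$ such that $\|\tilde M(:,\mathcal{K}(k))-W(:,\pi(k))\|_2 \leq \delta$ for all $k$, where $\delta = \mathcal{O}(\epsilon\,\kappa^2(W))$.

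Fix $k$ and set $j=\pi(k)$. Let $V\in\mathbb{R}^{m\times(r-1)}$ collect the extracted columns $\tilde M(:,\mathcal{K}(\ell))$ for $\ell\neq k$, so that $V = W_{-j} + E$, where $W_{-j}$ is $W$ with its $j$th column removed (and the remaining columns reordered to match $V$) and $\|E\|_2\leq\sqrt{r-1}\,\delta$. Let $P$ be the orthogonal projection onto $\mathrm{range}(V)^\perp$. The argument rests on two estimates. First, for every $\ell\neq j$, $\|PW(:,\ell)\|_2\leq\delta$, since $W(:,\ell)$ differs from a column of $V$ by at most $\delta$ in norm and $P$ annihilates columns of $V$. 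Second, $\|PW(:,j)\|_2\geq\alpha$ with $\alpha=\Omega(\sigma_{\min}(W))$: the hypothesis $\epsilon\leq\mathcal{O}(\sigma_{\min}(W)/(r\kappa^2(W)))$ is calibrated so that $\|E\|_2\ll\sigma_{\min}(W)$, which keeps $\sigma_{\min}(V)$ of order $\sigma_{\min}(W)$ and, by a Wedin-type principal-angle bound, keeps $\mathrm{range}(V)$ close to $\mathrm{range}(W_{-j})$; then $\|PW(:,j)\|_2$ is within a constant factor of $\mathrm{dist}(W(:,j),\mathrm{range}(W_{-j}))\geq\sigma_{\min}(W)$.

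With these two inequalities in hand, any data column $\tilde M(:,i)=Wh_i+N(:,i)$ with $\mathbf{1}^T h_i\leq 1$ satisfies
\[
\|P\tilde M(:,i)\|_2 \;\leq\; h_{ji}\|PW(:,j)\|_2+\sum_{\ell\neq j}h_{\ell i}\|PW(:,\ell)\|_2+\epsilon \;\leq\; h_{ji}\,\alpha + (1-h_{ji})\,\delta + \epsilon.
\]
The true anchor for $j$ is the column $\tilde M(:,j)=W(:,j)+N(:,j)$, whose projection has norm at least $\alpha-\epsilon$. Since the post-processing picks the index $k^*$ maximising $\|P\tilde M(:,\cdot)\|_2$, comparing the two bounds yields $1-h_{j,k^*}\leq 2\epsilon/(\alpha-\delta)=\mathcal{O}(\epsilon/\sigma_{\min}(W))$. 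Expanding
\[
\tilde M(:,k^*) - W(:,j) = (h_{j,k^*}-1)\,W(:,j)+\sum_{\ell\neq j}h_{\ell,k^*}\,W(:,\ell)+N(:,k^*)
\]
and using $\sum_{\ell\neq j} h_{\ell,k^*}\leq 1-h_{j,k^*}$ gives $\|\tilde M(:,k^*)-W(:,j)\|_2 \leq 2(1-h_{j,k^*})\,\sigma_{\max}(W)+\epsilon = \mathcal{O}(\epsilon\,\kappa(W))$, which is the claimed bound. Iterating over $k=1,\dots,r$ refines every extracted column.

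I expect the only genuine technical hurdle to be the quantitative form of the second estimate above: translating the spectral-norm closeness of $V$ to $W_{-j}$ into an almost-tight lower bound $\|PW(:,j)\|_2=\Omega(\sigma_{\min}(W))$. Everything else is bookkeeping with the simplex constraint on $h_i$. It is precisely this subspace-perturbation step that forces a noise threshold tighter than the one in Theorem~\ref{th1} (the extra factor in the denominator arises from controlling $\|E\|_2$ across $r-1$ columns each of error $\delta=\mathcal{O}(\epsilon\kappa^2(W))$), whereas the error amplification itself drops to $\kappa(W)$ because the post-processing performs a single projection rather than the cascaded $r$ projections in SPA.
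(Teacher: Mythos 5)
First, note that the paper does not actually prove this statement: Theorem~\ref{th2} is imported verbatim from Arora et al.\ \cite{Ar13} (their Theorem~4.3), and the only original content in the paper is the remark that follows it, which translates their parameter $\gamma$ (distance from a vertex to the affine hull of the others) into $\sigma_{\min}(W)$ and $\kappa(W)$ at the cost of a factor $\sqrt{r}$. Your attempt is therefore a from-scratch reconstruction of the post-processing analysis, and its architecture is the right one: leave-one-out projection $P$, the estimate $\|PW(:,\ell)\|_2\le\delta$ for $\ell\neq j$, the win of the anchor in the $\argmax$, the deduction $1-h_{j,k^*}=\mathcal{O}(\epsilon/\sigma_{\min}(W))$, and the final expansion giving error $\mathcal{O}((1-h_{j,k^*})\sigma_{\max}(W)+\epsilon)=\mathcal{O}(\epsilon\kappa(W))$. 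All of that bookkeeping is correct, including the maintenance of the bijection across the $r$ refinement steps.

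The genuine gap is exactly the step you defer: the lower bound $\|PW(:,j)\|_2=\Omega(\sigma_{\min}(W))$ does not follow from the stated hypothesis by a generic Wedin-type argument. Quantitatively, writing the closest point of $\mathrm{range}(V)$ to $W(:,j)$ as $Vd=(W_{-j}+E)d$, one gets
\[
\|PW(:,j)\|_2\;\ge\;\min_d\bigl(\|W(:,j)-W_{-j}d\|_2-\|E\|_2\|d\|_2\bigr)\;\ge\;\sigma_{\min}(W)-c\,\kappa(W)\,\|E\|_2,
\]
and the factor $\kappa(W)$ is unavoidable in the worst case because the optimal coefficient vector $d$ can have norm of order $\sigma_{\max}(W)/\sigma_{\min}(W)$ (take $W_{-j}=e_1$ and $W(:,j)=Le_1+\eta e_2$ with $L\gg\eta$: a perturbation of size $\eta/L=\sigma_{\min}$ already collapses the distance to zero, whereas the unperturbed distance is $\eta=L\sigma_{\min}$). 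With $\|E\|_2\le\sqrt{r}\,\delta$ and $\delta=\Theta(\epsilon\kappa^2(W))$ from Theorem~\ref{th1}, keeping $\|PW(:,j)\|_2=\Omega(\sigma_{\min}(W))$ therefore forces $\epsilon\lesssim\sigma_{\min}(W)/(\sqrt{r}\,\kappa^3(W))$, which is strictly stronger than the stated threshold $\sigma_{\min}(W)/(r\,\kappa^2(W))$ whenever $\kappa(W)\gtrsim\sqrt{r}$, i.e.\ precisely in the ill-conditioned regime the theorem is meant to address. So as written your argument proves a weaker statement. Closing the gap requires an ingredient you are not using: in \cite{Ar13} the columns are $\ell_1$-normalized, so $\|W(:,j)\|_2\le 1$ and $1\le\sigma_{\max}(W)\le\sqrt{r}$, and their condition $\epsilon<\gamma^3/(20r)$ only converts into the stated $r\kappa^2(W)$ denominator through these extra inequalities (this is what the paper's remark is doing). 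Either import that normalization explicitly into your perturbation bound, or accept the extra power of $\kappa(W)$ in the noise threshold.
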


Theorem~\ref{th2} improves upon the error bound\footnote{Note however that the bound on the noise level is slightly weaker than in Theorem~\ref{th1} being proportional to $r^{-1}$ instead of ${r}^{-1/2}$ (although we believe the bound could be improved, and $r$ is small in practice).} of Theorem~\ref{th1} (to a factor $\kappa(W)$), however it still requires $\epsilon$ to be rather small to guarantee any error bound (especially for ill-conditioned matrices). Moreover, it was observed in practice that SPA and its post-processed version often give similar results \cite{Ge} (see also Section~\ref{ne} for some results on synthetic data sets).

\begin{remark}
In \cite{Ar13}, the bounds are in terms of $\gamma(W)$, which is defined as the minimum distance between a vertex and the affine hull of the other vertices. It reads as follows (in our notations):  \cite[Th.~4.3]{Ar13} 
Let $M$ satisfy $||M(:,j)||_1 = 1$ for all $j$. If $\epsilon < \frac{\gamma^3}{20r}$, then there 
is a combinatorial algorithm that given $M$ outputs a subset of the columns of $M$ of size $r$ that are at distance at most $\mathcal{O}( \frac{\epsilon}{\gamma} ) $ from the columns of $W$.

Because we need to add the origin in the data set to extend their analysis to SPA (see footnote~\ref{footnote1}), 
we have to replace $\gamma(W)$ with $\gamma([W, 0]) = \sigma_{\min}(W)$. 
Since $||W(:,j)||_1 = 1$ for all $j$, $1 \leq \sigma_{\max}(W) \leq \sqrt{r}$ and hence 
$\mathcal{O}\left(  \frac{\kappa(W)}{\sqrt{r}} \right) \leq \gamma^{-1}([W, 0]) = \sigma_{\min}(W)^{-1} \leq \mathcal{O}\left( \kappa(W) \right)$ so that 
we have slightly weakened the bound in Theorem~\ref{th2} (of at most a factor of $\sqrt{r}$) to work with $\kappa(W)$. 
\end{remark}

\subsection{Contribution and Outline of the Paper}

The paper is organized as follows. In Section~\ref{precsec}, we propose and analyze a way to precondition near-separable matrices using semidefinite programming (SDP), where we focus on the case $m=r$. 
This in turn allows us to solve near-separable NMF problems in a more robust way since the error bounds provided by near-separable NMF algorithms depend on the conditioning of the input matrix. In particular, this allows us to prove that the preconditioning makes SPA significantly more robust to noise: 
\begin{quote}[Th.~\ref{mainth}]
Let $\tilde{M}$ satisfy Assumption~\ref{ass1} with $m = r$. If $\epsilon \leq \mathcal{O}\left( \frac{\sigma_{\min}(W)}{r\sqrt{r}} \right)$, then preconditioned SPA identifies all the columns of $W$ up to error $\mathcal{O} \left( \epsilon \, \kappa(W) \right)$. 
\end{quote} 
Observe that, up to some factor in $r$, the upper bound on $\epsilon$ only depends on $\sigma_{\min}(W)$, and we cannot expect better for SPA (Section~\ref{spasec}), hence preconditioned SPA can tolerate much higher noise levels than SPA and post-processed SPA.  

In Section~\ref{ldr}, the preconditioning is combined with a linear dimensionality reduction technique to handle the case $m > r$. 
In Section~\ref{as}, an active-set method is proposed to deal with large-scale problems (that is, for $n$ large). 
In Section~\ref{ne}, we show on some synthetic data sets that combining the preconditioning with SPA leads to a significantly more robust algorithm. 
We also show that it can be applied to large-scale real-world hyperspectral images.

\begin{remark} 
While writing this paper, we noticed the very recent paper \cite{M13} (published online two weeks before this paper was, now published as \cite{M13b}) where the same idea is used to solve near-separable NMF problems. 
However, the proposed algorithm and its analysis in \cite{M13} are different: in fact, several SDP's might have to be solved while \emph{the SDP is not used for preconditioning but as a preprocessing to select a subset of the columns of $M$} (see \cite[Algo.~2]{M13}). 
Because of that, the robustness result only holds with an additional condition on the input separable matrix \cite[Th.~2]{M13}: the author requires that there are no duplicates nor near duplicates of the columns of $W$ in the data set which is a rather strong and not very reasonable assumption in practice; see the discussion in~\cite{G13}. 

Therefore, the main contributions of this paper are (i) the use of SDP for preconditioning near-separable matrices, and (ii) a general robustness analysis for preconditioned SPA. 
\end{remark}

\subsection{Notation} 

The set $\mathbb{S}^{r}_+$ denotes the set of  $r$-by-$r$ symmetric positive-semidefinite matrices. % The identity matrix of dimension $r$ is denoted $I_r$. 
The eigenvalue of $A \in \mathbb{S}^{r}_+$ will be denoted  
\[
\lambda_{\max}(A) = \lambda_1(A) \geq \lambda_2(A) \geq \dots \geq \lambda_{r}(A) = \lambda_{\min}(A) \geq 0. 
\]
The singular values of a matrix $B \in \mathbb{R}^{r \times n}$ where $r \leq n$ are denoted
\[
\sigma_{\max}(B) = \sigma_1(B) \geq \sigma_2(B) \geq \dots \geq \sigma_r(B) = \sigma_{\min}(B) \geq 0. 
\]
For a matrix $X \in \mathbb{R}^{m \times n}$, we denote $X(:,j)$ or $x_j$ its $j$th column.

%\section{Near-Separable NMF}
  
%The problem can be geometrically interpreted as follows. Given $n \geq r$ points in $\mathbb{R}^r$ (the columns of $M$) which are contained in the convex hull of a subset of $r$ points (the columns of $W$) and the origin, the problem is to identify, among the $n$ given points, the $r$ points corresponding to the vertices of the convex hull. In the noiseless case, the problem is easy, and can for example be solved using linear programming \cite[Section 5]{AGKM11} (for each point, check whether it belongs to the convex hull of the other points), or using a much faster greedy algorithm \cite{GV12}.%However, when some noise is added to the input, the problem becomes non-trivial. Many algorithms have been introduced in the literature (in particular in the remote sensing literature, see the survey \cite{BP12} and the references therein). However, there is in general no analysis in the noisy case, and some of these approached are not robust to any  noise \cite{GV12}. 

%to work under any small perturbations of the data. The bounds on the error are significantly better than the one obtained with the greedy algorithm analyzed in \cite{GV12}. 
%Although the algorithm is computationally more expensive, we will show how it can be applied to large large-scale real-world problems.  

\section{Preconditioning} \label{precsec}

If a near-separable matrix $\tilde{M}$ satisfying Assumption~\ref{ass1} is pre-multiplied by a full rank matrix $Q$  such that $QW$ remains full column rank, then the new matrix $Q\tilde{M}$ also satisfies Assumption~\ref{ass1} where the matrix $W$ is replaced with $QW$, while the noise is replaced with $QN$. Therefore, if one can find a matrix $Q$ such that $QW$ is better conditioned than $W$ (and $QN$ is not too large), near-separable NMF algorithms applied on $Q\tilde{M}$ should be more robust against noise. 
We focus our analysis on the case $m = r$. % which implies, %under Assumption~\ref{ass1}, that $W$ is a full rank $r$-by-$r$ matrix. 
We show in Section~\ref{ldr} how to use the preconditioning when $m > r$ using linear dimensionality reduction.  

A first straightforward and useful observation is that taking $Q = R W^{-1}$ where $R$ is any $r$-by-$r$ orthonormal matrix\footnote{
Actually, we only need $\kappa(QW) = 1$ so $Q$ can be a scaling of $R W^{-1}$.} (that is, $R^TR = I_r$) gives a matrix $QW$ which is perfectly conditioned, that is, $\kappa(QW) = 1$. Therefore, the preconditioning $Q$ should be as close as possible to a matrix of the form $R W^{-1}$. 
The main goal of this section is to show how to provably compute an approximation of $Q = R W^{-1}$ based on $\tilde{M}$ using semidefinite programming.

\subsection{Motivation: Preconditioning SPA} 

It is rather straightforward to analyze the influence of a preconditioning on SPA. 
\begin{corollary} \label{cor1} 
Let $\tilde{M}$ satisfy Assumption~\ref{ass1} with $m = r$ and let $Q \in \mathbb{R}^{r \times r}$.  
If $QW$ is full column rank, and 
\[
\epsilon \leq \mathcal{O} \left( \frac{\sigma_{\min}(QW)}{\sqrt{r} \sigma_{\max}(Q) \kappa^2(QW)} \right),
\] 
then SPA applied on matrix $Q\tilde{M}$ identifies indices corresponding to the columns of $W$ up to error  $\mathcal{O} \left(  \epsilon \, \kappa(Q) \,  \kappa(QW)^2  \right)$. 
\end{corollary}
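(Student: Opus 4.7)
The plan is to apply Theorem~\ref{th1} directly to the transformed matrix $Q\tilde{M}$, and then translate the resulting error bound from the preconditioned coordinate system back to the original one. Because $Q$ is a linear map, SPA outputs a set of \emph{indices}, which is invariant under left multiplication — so the correctness of the identified $\mathcal{K}$ is inherited, and only the quantitative bounds need to be recomputed.

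First I would observe that $Q\tilde{M} = Q(WH+N) = (QW)H + QN$, where $H = [I_r,\,H']\Pi$ still has the structure required by Assumption~\ref{ass1}. Hence $Q\tilde{M}$ is a near-separable matrix with basis $QW$ (which is full column rank by hypothesis) and perturbation $QN$. The new noise level is bounded by
\[
\max_j \|QN(:,j)\|_2 \;\leq\; \sigma_{\max}(Q)\,\max_j\|N(:,j)\|_2 \;\leq\; \sigma_{\max}(Q)\,\epsilon.
\]
Thus, applying Theorem~\ref{th1} to $Q\tilde{M}$, the hypothesis becomes
\[
\sigma_{\max}(Q)\,\epsilon \;\leq\; \mathcal{O}\!\left(\frac{\sigma_{\min}(QW)}{\sqrt{r}\,\kappa^2(QW)}\right),
\]
which is exactly the bound on $\epsilon$ stated in the corollary after dividing by $\sigma_{\max}(Q)$. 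Under this condition, SPA applied to $Q\tilde{M}$ returns an index set $\mathcal{K}$ satisfying
\[
\max_{1 \leq j \leq r} \min_{k \in \mathcal{K}} \|QW(:,j) - Q\tilde{M}(:,k)\|_2 \;\leq\; \mathcal{O}\!\bigl(\sigma_{\max}(Q)\,\epsilon\,\kappa^2(QW)\bigr).
\]

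Finally, to recover an error bound in the original coordinates, I would left-multiply by $Q^{-1}$. For any vectors $u,v \in \mathbb{R}^r$,
\[
\|u - v\|_2 \;=\; \|Q^{-1}(Qu - Qv)\|_2 \;\leq\; \tfrac{1}{\sigma_{\min}(Q)}\,\|Qu - Qv\|_2.
\]
Applying this with $u = W(:,j)$ and $v = \tilde{M}(:,k)$ yields
\[
\|W(:,j) - \tilde{M}(:,k)\|_2 \;\leq\; \tfrac{1}{\sigma_{\min}(Q)}\,\mathcal{O}\!\bigl(\sigma_{\max}(Q)\,\epsilon\,\kappa^2(QW)\bigr) \;=\; \mathcal{O}\!\bigl(\epsilon\,\kappa(Q)\,\kappa^2(QW)\bigr),
\]
which is the claimed bound. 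There is no real obstacle here — the argument is essentially a change-of-variables, and the only bookkeeping item is keeping track of how $\sigma_{\max}(Q)$ and $\sigma_{\min}(Q)$ enter on the noise side (through $\|QN\|$) versus on the error-translation side (through $\|Q^{-1}\|$), which together produce the factor $\kappa(Q)$ in the final error.
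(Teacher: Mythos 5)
Your proof is correct and follows essentially the same route as the paper: apply Theorem~\ref{th1} to $Q\tilde M$ with basis $QW$ and noise $QN$ bounded by $\sigma_{\max}(Q)\,\epsilon$, then translate the error back via $\sigma_{\min}(Q)$, producing the factor $\kappa(Q)$. The only cosmetic difference is that you invoke $Q^{-1}$ explicitly where the paper uses the inequality $\|Qx\|_2 \geq \sigma_{\min}(Q)\|x\|_2$ directly; these are equivalent since $QW$ full column rank with $m=r$ forces $Q$ to be invertible.
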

\begin{proof}
Let us denote $\epsilon'$ the smallest value such that $||QN(:,j)||_2 \leq \epsilon'$ for all $j$,  $\sigma' = \sigma_{\min}(QW)$ and $\kappa' = \kappa(QW)$. 
By Theorem~\ref{th1}, if $\epsilon' \leq \mathcal{O} \left( \frac{\sigma'}{\sqrt{r} \kappa'^2} \right)$, SPA applied on matrix $Q\tilde{M}$ identifies the columns of matrix $QW$ up to error $\mathcal{O} \left( \epsilon' \kappa'^2 \right)$, that is, it identifies $r$ columns of $Q\tilde{M}$ such that for all $1 \leq k \leq r$ there exists $j$ in the extracted set of indices such that 
\[
\mathcal{O} \left( \epsilon' \kappa'^2 \right) \geq ||Q \tilde{M}(:,j) - Q W(:,k) ||_2  \geq  \sigma_{\min}(Q) ||\tilde{M}(:,j) - W(:,k) ||_2 . 
\]
This implies that the indices extracted by SPA allows us to identify the columns of $W$ up to error $\mathcal{O} \left( \frac{\epsilon' \kappa'^2}{ \sigma_{\min}(Q)} \right)$. 
Moreover, we have for all $j$ that 
\[
||QN(:,j)||_2 \leq \sigma_{\max}(Q) \epsilon, 
\] 
hence $\epsilon' \leq \sigma_{\max}(Q) \epsilon$ so that 
\[
\epsilon \leq \mathcal{O} \left( \frac{\sigma'}{\sqrt{r} \kappa'^2 \sigma_{\max}(Q)} \right) 
\Rightarrow \epsilon' \leq \mathcal{O} \left( \frac{\sigma'}{\sqrt{r} \kappa'^2} \right), 
\text{ while } 
\mathcal{O} \left( \frac{\epsilon' \kappa'^2 }{\sigma_{\min}(Q)} \right) \leq \mathcal{O} \left( \epsilon {\kappa'^2} \kappa(Q) \right).
\]  
\end{proof}

In particular, using $Q = W^{-1}$ (or any orthonormal transformation) gives the following result. 
\begin{corollary}  \label{cor2}
Let $\tilde{M}$ satisfy Assumption~\ref{ass1} with $m = r$. If $\epsilon \leq  \mathcal{O} \left( \frac{\sigma_{\min}(W)}{\sqrt{r}} \right)$, then SPA applied on the matrix $W^{-1}\tilde{M}$ identifies indices corresponding to the columns of $W$ up to error $\mathcal{O} \left( \epsilon \kappa(W) \right)$. 
\end{corollary}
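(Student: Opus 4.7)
The plan is to apply Corollary~\ref{cor1} directly, substituting $Q = W^{-1}$ and simplifying all terms using the fact that $QW = I_r$. This reduction is essentially a bookkeeping exercise: once we evaluate the four quantities $\sigma_{\min}(QW)$, $\kappa(QW)$, $\sigma_{\max}(Q)$, and $\kappa(Q)$ appearing in the hypothesis and conclusion of Corollary~\ref{cor1}, the claimed bounds fall out immediately.

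First I would record the spectral data for $Q = W^{-1}$. Since $W$ has full column rank and $m=r$, $W^{-1}$ exists and $QW = I_r$, so $\sigma_{\min}(QW) = \sigma_{\max}(QW) = 1$ and $\kappa(QW) = 1$. From the relation between singular values of a matrix and its inverse, $\sigma_{\max}(Q) = \sigma_{\max}(W^{-1}) = 1/\sigma_{\min}(W)$ and $\sigma_{\min}(Q) = 1/\sigma_{\max}(W)$, which yields $\kappa(Q) = \kappa(W)$.

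Next I would plug these values into Corollary~\ref{cor1}. The noise-level assumption becomes
\[
\epsilon \;\leq\; \mathcal{O}\!\left(\frac{\sigma_{\min}(QW)}{\sqrt{r}\,\sigma_{\max}(Q)\,\kappa^2(QW)}\right) \;=\; \mathcal{O}\!\left(\frac{1}{\sqrt{r}\,\cdot\,1/\sigma_{\min}(W)\,\cdot\,1}\right) \;=\; \mathcal{O}\!\left(\frac{\sigma_{\min}(W)}{\sqrt{r}}\right),
\]
matching the hypothesis of the corollary. The error bound becomes $\mathcal{O}\bigl(\epsilon\,\kappa(Q)\,\kappa(QW)^2\bigr) = \mathcal{O}\bigl(\epsilon\,\kappa(W)\bigr)$, which is exactly the claimed conclusion.

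There is essentially no obstacle here; the only subtlety to double-check is that $QW = I_r$ is indeed full column rank (trivially true), so the hypothesis of Corollary~\ref{cor1} is met. I would also note, as the statement footnote in Section~\ref{precsec} already observes, that the same bound holds verbatim if $W^{-1}$ is replaced by $R W^{-1}$ for any orthonormal $R$, since such a substitution leaves all four spectral quantities above unchanged.
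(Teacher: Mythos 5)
Your proposal is correct and follows exactly the same route as the paper: substitute $Q = W^{-1}$ into Corollary~\ref{cor1}, note that $QW = I_r$ gives $\sigma_{\min}(QW) = \kappa(QW) = 1$, and use $\sigma_{\max}(Q) = \sigma_{\min}(W)^{-1}$, $\sigma_{\min}(Q) = \sigma_{\max}(W)^{-1}$, $\kappa(Q) = \kappa(W)$ to read off both the noise condition and the error bound. The only difference is that you spell out the arithmetic that the paper leaves implicit.
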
 
\begin{proof}
Taking $Q = W^{-1}$ in Corollary~\ref{cor1} gives $\sigma_{\min}(QW) = \kappa(QW) = 1$ while $\sigma_{\min}(Q) = \sigma_{\max}(W)^{-1}$, $\sigma_{\max}(Q) = \sigma_{\min}(W)^{-1}$, and $\kappa(Q) = \kappa(W)$. 
\end{proof}

Of course, the matrix $W$ is unknown, otherwise the near-separable NMF problem would be solved. 
In this section, we will show how to approximately compute $W^{-1}$ from $\tilde{M}$ using semidefinite programming.

\begin{remark}[Combining preconditioning with post-processing]
Note that combining the preconditioning with post-processed SPA (see Theorem~\ref{th2}) would not improve the error bound significantly (only up to a factor $\kappa(QW)$ which will be shown to be constant for our preconditioning); see Section~\ref{ne} for some numerical experiments. 
\end{remark}

\subsection{Minimum Volume Ellipsoid and SDP Formulation for Approximating $W^{-1}$}   

%We have seen in the previous section that, if we were given $RW^{-1}$ where $R$ is any orthonormal matrix, we could significantly improve the error bounds for SPA, leading to a significantly more robust near-separable NMF algorithm. %In this section, we show how to approximately compute such a matrix. 
As explained in the previous section, we need to find a matrix $Q$ such that $\kappa(QW)$ is close to one. %\subsubsection{Approximating $(WW^T)^{-1}$ Instead} 
Let $A = Q^TQ \succ 0$. We have $W^T A W = (QW)^T (QW)$ so that $\sigma_i(W^T A W) = \sigma_i^2(QW)$ for all $i$, 
hence it is equivalent to find a matrix $A$ such that $\kappa(W^T A W)$ is close to one since it will imply that $\kappa(QW)$ is close to one, while we can compute a factorization of $A = Q^T Q$ (e.g., a Cholesky decomposition). Ideally, we would like that $A = (WW^T)^{-1} = W^{-T}W^{-1}$, that is, $Q = RW^{-1}$ for some orthonormal transformation $R$.

%Our approach works as follows. We are going 
The central step of our algorithm is to compute the minimum volume ellipsoid centered at the origin containing all columns of $\tilde{M}$. 
An ellipsoid $\mathcal{E}$ centered at the origin in $\mathbb{R}^r$ is described via a positive definite matrix $A \in \mathbb{S}^r_{++}$ : 
\[
\mathcal{E} = \{ \ x \in \mathbb{R}^r \ | \ x^T A x \leq 1  \ \}. 
\]
The axes of the ellipsoid are given by the eigenvectors of matrix $A$, while their length is equal to the inverse of the square root of the corresponding eigenvalue. 
The volume of $\mathcal{E}$ is equal to $\det(A)^{-1/2}$ times the volume of the unit ball in dimension $r$. Therefore, given a matrix $\tilde{M} \in \mathbb{R}^{r \times n}$ of rank $r$, we can formulate the minimum volume ellipsoid centered at the origin and containing the columns $\tilde{m_i}$ $1 \leq i \leq n$ of matrix $\tilde{M}$ as follows
\begin{align}
\min_{A \in \mathbb{S}^r_+} \; \; %& 
\log \det(A)^{-1}  %\\
\quad  \text{such that } \quad 
								 & \tilde{m_i}^T A \tilde{m_i} \leq 1 \quad \text{ for } i=1,2,\dots,n .    \label{SDPp} 
\end{align} 
This problem is SDP representable \cite[p.222]{BV04} (see also Remark~\ref{solveSDP}). 
Note that if $\tilde{M}$ is not full rank, that is, the convex hull of the columns of $\tilde{M}$ and the origin is not full dimensional, then the objective function value is unbounded below. Otherwise, the optimal solution of the problem exists and is unique \cite{john}. 
%Let us denote $A^*$ the optimal solution of \eqref{SDPp}. 
\begin{theorem} \label{precnoiseless}
For a separable matrix $\tilde{M} = WH + N$ 
satisfying Assumption~\ref{ass1} with $m = r$ and in the noiseless case (that is, $N = 0$ and $M = \tilde{M}$), the optimal solution of \eqref{SDPp} is given by $A^* = (WW^T)^{-1}$. 
\end{theorem}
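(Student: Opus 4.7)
My plan is to reduce the SDP to a simple problem via a change of variables and then apply Hadamard's inequality.

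First, I would observe that in the noiseless case the columns of $\tilde{M}$ all lie in the simplex $\conv\{0, w_1, \dots, w_r\}$. Indeed, since $H = [I_r \; H']$ with $H' \geq 0$ and column sums at most one, the columns of $M$ are either the $w_j$ themselves or convex combinations $Wh'_j = \sum_k (h'_j)_k w_k + (1 - \sum_k (h'_j)_k) \cdot 0$. Any feasible ellipsoid $\{x : x^T A x \leq 1\}$ is convex and contains $0$ (since $0^T A 0 = 0$), so it contains every $\tilde m_i$ as soon as it contains $w_1, \dots, w_r$. Thus \eqref{SDPp} is equivalent to
\begin{equation*}
\min_{A \succ 0} \; \log \det(A^{-1}) \quad \text{s.t.} \quad w_j^T A w_j \leq 1, \ j = 1, \dots, r.
\end{equation*}

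Next, since $W$ is invertible ($m = r$, full column rank), I would perform the change of variables $B = W^T A W$, equivalently $A = W^{-T} B W^{-1}$. Then $w_j^T A w_j = e_j^T B e_j = B_{jj}$, so the constraints become $B_{jj} \leq 1$. For the objective, $\det A = \det(W)^{-2} \det B$, so $\log \det(A^{-1}) = \log \det(B^{-1}) + 2 \log|\det W|$, and minimizing over $A \succ 0$ is equivalent to maximizing $\det B$ over $B \succ 0$ with $B_{jj} \leq 1$.

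The main step is then Hadamard's inequality: for any positive semidefinite $B$, $\det B \leq \prod_j B_{jj} \leq 1$, with equality in the first inequality only when $B$ is diagonal (given $B \succ 0$), and equality in the second only when $B_{jj} = 1$ for all $j$. Hence the unique maximizer is $B^* = I_r$, which gives $A^* = W^{-T} W^{-1} = (WW^T)^{-1}$. (Uniqueness is also known from \cite{john}, so it suffices to verify that this $A^*$ is feasible and optimal, which follows from the computation $w_j^T (WW^T)^{-1} w_j = e_j^T W^T W^{-T} W^{-1} W e_j = 1$.) No step is a real obstacle here; the only thing to be careful about is justifying that the $n - r$ non-vertex constraints are redundant, which is the convexity/containment argument above.
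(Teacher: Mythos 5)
Your proof is correct, but it follows a genuinely different route from the paper's. The paper proves optimality by exhibiting a dual certificate: it checks that $A^* = (WW^T)^{-1}$ is primal feasible, writes down the dual of \eqref{SDPp} from \cite{BV04}, verifies that $y^* = [e_r;\,0]$ is dual feasible with the same objective value, and concludes by duality (uniqueness being quoted from \cite{john}). You instead argue entirely on the primal side: since every column of $M$ is a convex combination of $0, w_1, \dots, w_r$ and the ellipsoid is a convex set containing the origin, only the $r$ vertex constraints matter; the substitution $B = W^T A W$ turns these into $B_{jj} \le 1$ and the objective into maximizing $\det B$; and Hadamard's inequality $\det B \le \prod_j B_{jj} \le 1$ pins down $B^* = I_r$. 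All steps check out (note that $e_j^T W^T (WW^T)^{-1} W e_j = 1$ confirms feasibility, and the equality case of Hadamard for $B \succ 0$ forces $B$ diagonal with unit diagonal). Your approach buys a self-contained, more elementary argument that delivers uniqueness of the optimum for free via the equality case of Hadamard, and it makes explicit that only the vertex constraints are essential --- which is precisely the observation underlying the active-set method of Section~\ref{as}. The paper's duality argument is shorter once the dual is quoted, and the certificate $y^*$ identifies the active constraints and their multipliers explicitly, which is the structure exploited in the perturbation analysis that follows.
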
 
\begin{proof}
The matrix $A^* = (WW^T)^{-1}$ is a feasible solution of the primal~\eqref{SDPp}: In fact, for all $i$, 
\[
{m_i}^T A {m_i} = {h_i}^T W^T W^{-T}W^{-1} W{h_i} = ||h_i||_2^2 \leq ||h_i||_1^2 \leq 1. 
\]
The dual of \eqref{SDPp} is given by \cite[p.222]{BV04} 
\begin{equation} \label{dual}
\max_{y \in \mathbb{R}^n} 
\; \; 
\log \det \left(\sum_{i=1}^n y_i \tilde{m_i} \tilde{m_i}^T\right) - e^T y + r \quad  \text{ such that } \;  y \geq 0. 
\end{equation}
One can check that $y^* = [e_r; \, 0]$ is a feasible solution of the dual (with $\sum_{i=1}^n y_i \tilde{m}_i \tilde{m}_i^T = \sum_{i=1}^r w_i w_i^T = WW^T$ and $e^T y = r$) whose objective function value coincides with the one of the primal solution $A^* = (WW^T)^{-1}$ which is therefore optimal by duality. 
\end{proof}
Theorem~\ref{precnoiseless} shows that, in the noiseless case, the optimal solution of \eqref{SDPp} provides us with an optimal preconditioning for the separable NMF problem. \\ 

%Geometrically --- explain geometric interpretation? \\

 % hence provide  optimal preconditioning for matrix $M$ since $W^T A^* W = I_r$ (add details here, dual, etc.).   \\

We now show that for sufficiently small noise, the optimal solution $A^*$ of \eqref{SDPp} still provides us with a good preconditioning for $\tilde{M}$, that is, $\kappa(W^T A^* W) \approx 1$. Intuitively, the noise $N$ perturbs `continuously' the feasible domain of \eqref{SDPp}, hence the optimal solution is only slightly modified for small enough perturbations $N$; see, e.g., \cite{R82}. 
In the following, we quantify this statement precisely. In other words, \emph{we analyze the sensitivity to noise of the optimal solution of the minimum volume ellipsoid problem}.

Let us perform a change of variable on the SDP \eqref{SDPp} using $A = W^{-T} C W^{-1}$ to obtain the following equivalent problem 
\begin{align}
C^* = \argmin_{C \in \mathbb{S}^r_+} \; \; & \log \det(C)^{-1} + \log \det(WW^T) \nonumber \\
\text{such that } \qquad 
%& A \succeq 0 , \label{SDP} \\ 
								 & \tilde{m_i}^T \left( W^{-T} C W^{-1} \right) \tilde{m_i} \leq 1 \quad \text{ for } i=1, 2 , \dots, n .    \label{SDPc} 
\end{align} 
We have that $A^* = W^{-T} C^* W^{-1}$ where $A^*$ is the optimal solution of \eqref{SDPp}.  
Since our goal is to bound $\kappa(W^T A^* W)$ and $W^T A^* W = C^*$, it is equivalent to show that $C^*$ is well-conditioned, that is, $\kappa(C^*) \approx 1$. 

%More precisely, letting $C^* = {B^*}^T B^*$, we need to bound $\kappa(B^*)$ ($= \sqrt{\kappa(C^*)}$); see Corollary~\ref{cor1}. 

\subsection{Upper Bounding $\kappa(C^*)$} 

%First note that in 
%It is interesting to notice that 
The case $r = 1$ is trivial since $C^*$ is a scalar hence $\kappa(C^*) = 1$ (the near-separable NMF problem is trivial when $r=1$ since all columns of $\tilde{M}$ are multiple of the column of $W \in \mathbb{R}^{m \times 1}$). 

Otherwise, since $\kappa(C^*) \geq 1$, we only need to provide an upper bound for $\kappa(C^*)$. 
%\subsubsection*{Lower Bound on $\det(C^*)$} 
Let us first provide a lower bound on $\det(C^*)$ which will be useful later. 
\begin{lemma}  \label{lemlwb} 
If $\tilde{M} = WH + N$ satisfies Assumption~\ref{ass1} with $m = r$, then the optimal solution $C^*$ of \eqref{SDPc} satisfies 
\begin{equation} \label{detClow}
\det(C^*) \geq \left( 1 + \frac{\epsilon}{\sigma_{\min}(W)} \right)^{-2r} .
\end{equation}
\end{lemma}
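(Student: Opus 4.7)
The plan is to lower-bound $\det(C^*)$ by exhibiting an explicit feasible point of \eqref{SDPc} whose determinant is at least the claimed bound. Since $C^*$ minimizes $\log\det(C)^{-1}$, equivalently maximizes $\det(C)$, over the feasible set, any feasible $C$ provides the inequality $\det(C^*) \geq \det(C)$.

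The natural guess is the simplest possible choice: a scalar multiple of the identity, $C = \alpha I_r$ with $\alpha > 0$ to be determined. With this choice, the constraint $\tilde{m}_i^T W^{-T} C W^{-1} \tilde{m}_i \leq 1$ collapses to $\alpha \, \|W^{-1}\tilde{m}_i\|_2^2 \leq 1$ for every $i$, so I just need a uniform upper bound on $\|W^{-1}\tilde{m}_i\|_2$.

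Here Assumption~\ref{ass1} does the work directly. Writing $\tilde{m}_i = W h_i + n_i$ with $\|h_i\|_1 \leq 1$ and $\|n_i\|_2 \leq \epsilon$, I get $W^{-1}\tilde{m}_i = h_i + W^{-1} n_i$, so by the triangle inequality and $\|h_i\|_2 \leq \|h_i\|_1 \leq 1$,
\[
\|W^{-1}\tilde{m}_i\|_2 \;\leq\; \|h_i\|_2 + \|W^{-1}\|_2 \, \|n_i\|_2 \;\leq\; 1 + \frac{\epsilon}{\sigma_{\min}(W)}.
\]
Choosing $\alpha = \bigl(1 + \epsilon/\sigma_{\min}(W)\bigr)^{-2}$ then makes $C = \alpha I_r$ feasible in \eqref{SDPc}, and $\det(C) = \alpha^r = \bigl(1 + \epsilon/\sigma_{\min}(W)\bigr)^{-2r}$ gives exactly the claimed lower bound on $\det(C^*)$.

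There is no serious obstacle here: the only care needed is in the translation between the original variable $A = W^{-T} C W^{-1}$ and $C$, and in verifying that $\|h_i\|_2 \leq \|h_i\|_1$ (valid since $h_i \geq 0$, so $\|h_i\|_1 = \sum_k h_i(k)$ dominates the $\ell_2$ norm). Everything else is a one-line triangle-inequality estimate plus the optimality of $C^*$.
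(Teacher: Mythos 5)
Your proof is correct and follows essentially the same route as the paper: both exhibit the feasible point $C=\alpha I_r$ (equivalently $A=\alpha(WW^T)^{-1}$) with $\alpha = (1+\epsilon/\sigma_{\min}(W))^{-2}$ and invoke optimality of $C^*$; your triangle-inequality bound on $\|h_i + W^{-1}n_i\|_2$ is just the squared form of the paper's term-by-term expansion of the quadratic constraint. (Minor aside: $\|h_i\|_2\leq\|h_i\|_1$ holds for any vector, so the nonnegativity of $h_i$ is not actually needed there.)
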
 
\begin{proof} 
%It suffices to find a feasible solution for \eqref{SDPc}, or equivalently, for \eqref{SDPp}. 
Let us consider the matrix  
\[
A = \alpha (WW^T)^{-1}, \text{ where } 0 < \alpha \leq 1. 
\]
Let us denote $\sigma = \sigma_{\min}(W)$. Note that $\sigma_{\max}((WW^T)^{-1}) = \sigma_{\max}^2(W^{-1}) = \frac{1}{\sigma^2}$. 
We have   
\begin{align*} 
 \tilde m_i^T A \tilde m_i = h_i^T W^T A W h_i + 2 n_i^T A W h_i + n_i^T A n_i 
& \leq \alpha \left(1 + 2 \epsilon ||W^{-T} h_i||_2 +  \frac{\epsilon^2}{\sigma^2}\right) \\
& \leq \alpha \left(1 + 2 \frac{\epsilon}{\sigma} +  \frac{\epsilon^2}{\sigma^2}\right) = \alpha \left(1 + \frac{\epsilon}{\sigma} \right)^{2}, 
\end{align*} 
since $\tilde{m}_i = Wh_i + n_i$, $||n_i||_2 \leq \epsilon$ and $||h_i||_2 \leq ||h_i||_1 \leq 1$ for all $i$. Therefore, taking
\[
\alpha = \left(1 + \frac{\epsilon}{\sigma} \right)^{-2}
\]
makes $A$ a feasible solution of \eqref{SDPp}. 
From the change of variable $A = W^{-T} C W^{-1}$, we have that 
\[
C = W^T A W = \left(1 + \frac{\epsilon}{\sigma} \right)^{-2} I_r 
\]
is a feasible solution of \eqref{SDPc} hence $\det(C^*) \geq \left( 1 + \frac{\epsilon}{\sigma} \right)^{-2r}$. 
%hence $\det(B^*) \geq \left(1 + \frac{\epsilon}{\sigma} \right)^{-r}$. 
\end{proof}

We can now provide a first upper bound on $\lambda_{\max}(C^*)$. 

\begin{lemma}  \label{lemmax11}
If $\tilde{M}$ satisfies Assumption~\ref{ass1} with $m = r$ then 
\[
\left( 1 - \frac{\sqrt{r} \epsilon}{\sigma_{\min}(W)} \right) \sqrt{\lambda} \leq {\sqrt{r}},  
\]  
where $\lambda = \lambda_{\max}(C)$ and $C$ is any feasible solution  of \eqref{SDPc}.  
\end{lemma}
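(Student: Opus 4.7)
The plan is to exploit the constraints of \eqref{SDPc} corresponding to the $r$ separable indices and combine them, via triangle inequality in the seminorm $\|x\|_C := \sqrt{x^T C x}$, to produce a constraint on the top eigenvalue direction.

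First I would let $v$ be a unit eigenvector of $C$ associated with $\lambda = \lambda_{\max}(C)$, and write $v = \sum_{i=1}^r \alpha_i e_i$ with $\sum_i \alpha_i^2 = 1$ and $\sum_i |\alpha_i| \leq \sqrt{r}$. For the $r$ indices corresponding to $I_r$ in $H$, we have $\tilde{m}_i = w_i + n_i$ with $\|n_i\|_2 \leq \epsilon$, so
\[
u_i := W^{-1}\tilde{m}_i = e_i + W^{-1} n_i.
\]
The feasibility of $C$ translates into $\|u_i\|_C^2 = \tilde{m}_i^T (W^{-T} C W^{-1}) \tilde{m}_i \leq 1$ for each such $i$.

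Next I would use that $\|\cdot\|_C$ is a seminorm (a norm when $C \succ 0$, but only semidefiniteness is needed). Setting $w = \sum_i \alpha_i u_i$, the triangle inequality yields
\[
\|w\|_C \leq \sum_{i=1}^r |\alpha_i|\,\|u_i\|_C \leq \sum_{i=1}^r |\alpha_i| \leq \sqrt{r}.
\]
On the other hand, $w = v + \eta$ where $\eta = W^{-1}\sum_i \alpha_i n_i$, so $v = w - \eta$ gives
\[
\sqrt{\lambda} = \|v\|_C \leq \|w\|_C + \|\eta\|_C \leq \sqrt{r} + \sqrt{\lambda}\,\|\eta\|_2,
\]
using $\|\eta\|_C \leq \sqrt{\lambda_{\max}(C)}\,\|\eta\|_2$. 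It then remains to bound $\|\eta\|_2$: by Cauchy–Schwarz, $\|\sum_i \alpha_i n_i\|_2 \leq \epsilon \sum_i |\alpha_i| \leq \epsilon\sqrt{r}$, hence $\|\eta\|_2 \leq \sqrt{r}\epsilon/\sigma_{\min}(W)$. Substituting and rearranging gives the claimed inequality.

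The only subtle point is realizing that one should not try to bound $\lambda$ through a single constraint of \eqref{SDPc}, since each $u_i$ points in a fixed coordinate direction and the extremal direction $v$ of $C$ is a priori arbitrary. The key trick is to combine the $r$ separable constraints with the coefficients $\alpha_i$ of $v$, which puts the noise contributions into a single small vector $\eta$ and lets the triangle inequality absorb them into a $\sqrt{\lambda}$ term on the right-hand side. Everything else (Cauchy–Schwarz, $\|n_i\|_2 \leq \epsilon$, $\sum_i|\alpha_i|\leq \sqrt{r}$) is a routine estimate.
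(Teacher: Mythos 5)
Your proof is correct and is essentially the paper's argument in slightly different clothing: the paper factors $C=B^TB$, bounds each column norm $\|Be_k\|_2\le 1+\sqrt{\lambda}\,\epsilon/\sigma_{\min}(W)$ from the $r$ separable constraints, and passes through $\sigma_{\max}(B)\le\|B\|_F\le\sqrt{r}\max_k\|Be_k\|_2$, whereas you reach the identical inequality by expanding the top eigenvector in the coordinate basis and applying the triangle inequality for $\|\cdot\|_C=\|B\cdot\|_2$. Both use exactly the same ingredients (the $r$ pure constraints, $\|W^{-1}n_i\|_2\le\epsilon/\sigma_{\min}(W)$, and a Cauchy--Schwarz step yielding the $\sqrt{r}$ factor), so no further comment is needed.
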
 
\begin{proof}
Let us denote $N' = W^{-1} N$ so that $||N'(:,i)||_2 \leq \sigma_{\max}(W^{-1})||N(:,i)||_2~\leq~\frac{\epsilon}{\sigma}$ where $\sigma = \sigma_{\min}(W)$.   
Any feasible solution $C$ of \eqref{SDPc} must satisfy for all $i$ 
\[
(Wh_i+n_i)^T W^{-T} C W^{-1} (Wh_i+n_i) 
= (h_i+n'_i)^T C  (h_i+n'_i)  \leq 1. 
\]
In particular, for all $1 \leq k \leq r$ , 
\begin{equation}
(e_k+n'_k)^T C (e_k+n'_k)  \leq 1, 
\end{equation}
 where $e_k$'s are the columns of the identity matrix (since $m_k = We_k + n_k = w_k + n_k$ for $1 \leq k \leq r$ by Assumption~\ref{ass1}). 
 Letting $C = B^TB$ so that $\sigma_{\max}(B) = \sqrt{\lambda}$, we have for all $1 \leq k \leq r$ 
 \begin{equation}  \label{normcolB}
||B(:,k)||_2 - \sigma_{\max}(B) \frac{\epsilon}{\sigma}  \leq ||B(e_k+n'_k)||_2 \leq 1 
 \Rightarrow 
||B(:,k)||_2 = ||Be_k||_2 \leq 1 + \sigma_{\max}(B) \frac{\epsilon}{\sigma}.  
 \end{equation} 
Therefore, 
\[ 
\sqrt{\lambda} = \sigma_{\max}(B) \leq \sqrt{r} \max_k ||B(:,k)||_2 \leq \sqrt{r} \left( 1 + \sigma_{\max}(B)   \frac{\epsilon}{\sigma} \right) = \sqrt{r} \left( 1 + \sqrt{\lambda} \frac{\epsilon}{\sigma} \right), 
\]
which gives the result. 
\end{proof}

We now derive an upper bound on $\lambda_{\max}(C^*)$ independent of $r$.  

\begin{lemma} \label{lammax}  
If $\tilde{M}$ satisfies Assumption~\ref{ass1} with $m = r \geq 2$ and 
\[
\epsilon \leq \frac{\sigma_{\min}(W)}{8 r \sqrt{r}},
\] 
then $\lambda = \lambda_{\max}(C^*) \leq 4$ where $C^*$ is the optimal solution of \eqref{SDPc}.
\end{lemma}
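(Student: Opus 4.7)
The plan is to argue by contradiction: assume $\lambda \geq 4$ and then violate the determinant lower bound of Lemma~\ref{lemlwb}. The intuition is that a large top eigenvalue, combined with a trace bounded by roughly $r$, forces the remaining $r-1$ eigenvalues to be so small that $\det(C^*)$ drops below the value permitted by Lemma~\ref{lemlwb}.

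The starting point is the trace bound. Writing $C^* = B^T B$, inequality~\eqref{normcolB} from the proof of Lemma~\ref{lemmax11} already gives $\|B(:,k)\|_2 \leq 1 + \sqrt{\lambda}\,\epsilon/\sigma_{\min}(W)$ for each $1 \leq k \leq r$; summing the squares produces
\[
\tr(C^*) \,\leq\, r\left(1 + \sqrt{\lambda}\,\epsilon/\sigma_{\min}(W)\right)^2.
\]
Combining the hypothesis $\epsilon \leq \sigma_{\min}(W)/(8r\sqrt{r})$ with the crude a priori bound $\sqrt{\lambda} \leq 2\sqrt{r}$ that falls out of Lemma~\ref{lemmax11} yields $\sqrt{\lambda}\,\epsilon/\sigma_{\min}(W) \leq 1/(4r)$, so the factor $(1+\sqrt{\lambda}\,\epsilon/\sigma_{\min}(W))^2$ exceeds $1$ only by $\mathcal{O}(1/r)$. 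For $r=2$ and $r=3$ the resulting trace bound is numerically smaller than $4$, immediately contradicting $\lambda \leq \tr(C^*)$ together with $\lambda \geq 4$.

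For $r \geq 4$ the trace bound alone is too weak (it only gives $\lambda = \mathcal{O}(r)$), so I would bring in the determinant by applying AM--GM to $\lambda_2, \dots, \lambda_r$:
\[
\det(C^*) \,=\, \lambda \prod_{i=2}^r \lambda_i \,\leq\, \lambda \left(\frac{\tr(C^*) - \lambda}{r-1}\right)^{r-1}.
\]
Using the $\lambda$-independent trace bound $\tr(C^*) \leq r(1+1/(4r))^2$, the right-hand side, viewed as a function of $\lambda$, is decreasing beyond its maximum near $\lambda \approx 1$, so for $\lambda \geq 4$ the worst case is $\lambda = 4$. Substituting $\lambda=4$, the expression becomes $4\bigl(1 - c/(r-1)\bigr)^{r-1}$ for some absolute constant $c > 2$, and hence is bounded above by $4 e^{-c}$; since $4 e^{-c}$ is strictly less than the lower bound $(1 + \epsilon/\sigma_{\min}(W))^{-2r}$ from Lemma~\ref{lemlwb} (which under the hypothesis is at least $e^{-1/(4\sqrt{r})}$), one obtains the desired contradiction.

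The main obstacle is the $r \geq 4$ step: the constants must be calibrated so that the gap $c > 2$ is uniform in $r$. The hypothesis $\epsilon \leq \sigma_{\min}(W)/(8r\sqrt{r})$ is chosen precisely so that $r\bigl[(1+\sqrt{\lambda}\,\epsilon/\sigma_{\min}(W))^2 - 1\bigr]$ stays bounded by a small absolute constant, keeping $(\tr(C^*) - 4)/(r-1)$ uniformly below $1$ by a fixed margin --- and it is this fixed margin that lets the $(r-1)$-th power contract the expression fast enough to beat the leading factor of $4$, regardless of $r$.
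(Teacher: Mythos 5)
Your proposal is correct, and its skeleton is the same as the paper's: the trace bound $\tr(C^*)\leq r\bigl(1+\sqrt{\lambda}\,\epsilon/\sigma_{\min}(W)\bigr)^2$ from \eqref{normcolB}, the a priori estimate $\sqrt{\lambda}\leq 2\sqrt{r}$ from Lemma~\ref{lemmax11}, the AM--GM step $\det(C^*)\leq \lambda\bigl(\tfrac{\tr(C^*)-\lambda}{r-1}\bigr)^{r-1}$, and the determinant lower bound of Lemma~\ref{lemlwb} are exactly the ingredients of \eqref{bndsmax}. Where you genuinely diverge is the endgame. The paper reduces to the inequality $(r-1)(1+\tfrac{1}{8r})^{-2r/(r-1)}\leq \lambda^{1/(r-1)}(r+1-\lambda)$ and devotes Appendix~\ref{app1} to showing it forces $\lambda<4$, including a numerical check for each integer $2\leq r\leq 12$ and a separate asymptotic argument for $r\geq 13$. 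You instead dispose of $r=2,3$ directly from $\lambda\leq\tr(C^*)\leq r(1+\tfrac{1}{4r})^2<4$, and for $r\geq 4$ you evaluate the AM--GM bound at $\lambda=4$ (legitimate, since the bound is decreasing in $\lambda$ past $\tr(C^*)/r\approx 1$) to get $\det(C^*)\leq 4\bigl(1-\tfrac{c}{r-1}\bigr)^{r-1}\leq 4e^{-c}$ with $c=2.5-\tfrac{1}{16r}>2.4$ uniformly, which sits well below the lower bound $e^{-1/(4\sqrt{r})}\geq e^{-1/8}$. This buys a uniform-in-$r$ contradiction with no case-by-case verification, at the cost of nothing; the paper's route, by isolating the scalar inequality in $\lambda$ and $r$, is more modular but decidedly heavier. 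Both arguments are sound.
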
 
\begin{proof}
Let us denote $\epsilon' = \frac{\epsilon}{\sigma_{\min}(W)}$ and let $C^* = B^TB$ for some $B$. By Lemma~\ref{lemmax11}, we have 
\[
\sqrt{\lambda} \leq \frac{ \sqrt{r} }{ 1 - \sqrt{r} \epsilon'  } \leq 2 \sqrt{r}
\]
since $\epsilon' \leq \frac{1}{8 r \sqrt{r}} \leq \frac{1}{2\sqrt{r}}$. 
Using Equation~\eqref{normcolB}, we obtain 
\begin{align*}
\sum_{k=1}^r \lambda_k(C^*) 
= \sum_{k=1}^r \sigma^2_k(B) 
= ||B||_F^2 
& = \sum_{k=1}^r ||B(:,k)||_2^2  \\
& \leq r \left( 1 + \sqrt{\lambda} \, \epsilon' \right)^2 \leq r \left( 1 +  2 \sqrt{r} \, \epsilon' \right)^2 \leq r \left( 1 +   \frac{1}{r} \right),  
\end{align*}
where the last inequality follows from $\epsilon' \leq \frac{1}{8 r \sqrt{r}}$: in fact,  
\[
\left( 1 +  2 \sqrt{r} \, \epsilon' \right)^2 \leq \left( 1 + \frac{1}{4r} \right)^2 = 1 + \frac{1}{2r} + \frac{1}{16r^2} \leq 1 + \frac{1}{r}. 
\] 
Since the maximum $x^*$ of the problem
\[
\max_{x \in \mathbb{R}^r_+} \prod_{k=1}^r x_k \quad \text{ such that } \sum_{k=1}^r x_k = \beta  \geq 0, 
\]
is unique and given by $x_k^* = \frac{\beta}{r}$ for all $k$, we have that the optimal solution $C^*$ of \eqref{SDPc} satifies 
\begin{equation} \label{bndsmax} 
\left( 1 +  \frac{1}{8r} \right)^{-2r} \leq \left( 1 + \epsilon' \right)^{-2r}
\leq \det(C^*) 
= 
\prod_{k=1}^r \lambda_k(C^*) 
\leq  
\lambda \left( \frac{r \left( 1 + \frac{1}{r} \right) - \lambda}{r-1} \right)^{r-1}, 
\end{equation} 
where the left-hand side inequality follows from Lemma~\ref{lemlwb} and $\epsilon' \leq \frac{1}{8r}$. Equivalently, 
\begin{equation} \nonumber 
(r-1)  
\left( 1 + \frac{1}{8r} \right)^{-\frac{2r}{r-1}}
\leq \lambda^{\frac{1}{r-1}} \left( r  + 1 - \lambda \right). 
\end{equation} 
In Appendix~\ref{app1}, we prove that this inequality implies $\lambda \leq 4$ for any integer $r \geq 2$. 
\end{proof}

We have now an upper bound on $\lambda_{\max}(C^*)$. In order to bound $\kappa(C^*)$, it is therefore sufficient to find a lower bound for $\lambda_{\min}(C^*)$.

\begin{lemma} \label{lemmin2} If $\tilde{M}$ satisfies Assumption~\ref{ass1} with $m = r \geq 2$ and 
\[
\epsilon \leq \frac{\sigma_{\min}(W)}{8 r \sqrt{r} }, 
\] 
then $\lambda_{\min}(C^*) \geq \frac{1}{10}$ where $C^*$ is the optimal solution of \eqref{SDPc}. 
\end{lemma}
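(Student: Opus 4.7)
The plan is to combine the lower bound $\det(C^*) \ge (1+\epsilon')^{-2r}$ from Lemma~\ref{lemlwb} with the trace-type bound $\sum_{k=1}^r \lambda_k(C^*) \le r+1$ already established in the proof of Lemma~\ref{lammax} (middle of equation~\eqref{bndsmax}), using AM--GM on the $r-1$ largest eigenvalues to isolate $\lambda_{\min}(C^*)$.

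Write $\delta = \lambda_{\min}(C^*)$ and $\epsilon' = \epsilon/\sigma_{\min}(W) \le 1/(8r)$. From $\sum_{k=1}^r \lambda_k(C^*) \le r+1$ we get $\sum_{k=1}^{r-1} \lambda_k(C^*) \le r+1-\delta$, so AM--GM gives
\[
\prod_{k=1}^{r-1} \lambda_k(C^*) \;\le\; \left( \frac{r+1-\delta}{r-1} \right)^{r-1}.
\]
Multiplying by $\delta$ and using Lemma~\ref{lemlwb}, I obtain the single inequality
\[
\delta \left( \frac{r+1-\delta}{r-1} \right)^{r-1} \;\ge\; (1+\epsilon')^{-2r} \;\ge\; \left(1+\tfrac{1}{8r}\right)^{-2r}.
\]

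The rest is a purely numerical argument: I assume for contradiction that $\delta < 1/10$ and bound both sides by constants independent of $r$. On the left, using $r+1-\delta \le r+1$,
\[
\delta \left( \frac{r+1-\delta}{r-1} \right)^{r-1} < \frac{1}{10}\left( 1 + \frac{2}{r-1} \right)^{r-1} \le \frac{e^{2}}{10},
\]
since $(1+2/n)^n$ is increasing in $n$ with limit $e^2$. On the right, setting $m=8r$,
\[
\left(1+\tfrac{1}{8r}\right)^{-2r} = \left( (1+1/m)^{m} \right)^{-1/4} \ge e^{-1/4},
\]
using $(1+1/m)^m \le e$. Combining these two estimates yields $e^2/10 > e^{-1/4}$, i.e.\ $e^{9/4} > 10$, which is false ($9/4 = 2.25$ whereas $\ln 10 \approx 2.303$). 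The contradiction forces $\delta \ge 1/10$.

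The main obstacle is a minor one: verifying the numerical inequality $e^{9/4} < 10$ with enough margin so that the bound is clean. If more slack is wanted, one could instead use the stronger $\epsilon' \le 1/(8r\sqrt{r})$ to replace $(1+1/(8r))^{-2r} \ge e^{-1/4}$ with something approaching $1$ as $r$ grows, but the simple form above already suffices for the stated constant $1/10$ and for every integer $r \ge 2$ (the small values $r=2,3$ are handled by the same monotonicity of $(1+2/n)^n$).
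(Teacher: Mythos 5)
Your proposal is correct and follows essentially the same route as the paper: both combine the determinant lower bound from Lemma~\ref{lemlwb} with the eigenvalue-sum bound $\sum_k \lambda_k(C^*) \leq r+1$ from the proof of Lemma~\ref{lammax}, apply AM--GM to the $r-1$ largest eigenvalues, and reduce the claim to the numerical fact $e^{9/4} < 10$ (the paper states it directly as $\delta \geq e^{-1/4}/e^2 \geq 1/10$, whereas you phrase it as a contradiction). The monotonicity facts you invoke for $(1+2/n)^n$ and $(1+1/m)^m$ are exactly those used in the paper.
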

\begin{proof}
%Let $C^* = B^TB$, $\lambda = \lambda_{\max}(C^*) = \sigma_{\max}^2(B)$, $\sigma = \sigma_{\min}(W)$ and $\epsilon' = \frac{\epsilon}{\sigma}$. 
Using the same trick as for obtaining Equation~\eqref{bndsmax} and denoting $\delta = \lambda_{\min}(C^*)$, we have 
\[ 
e^{-1/4} \leq \left( 1 +  \frac{1}{8r} \right)^{-2r}  \leq \det(C^*) 
%\leq \delta  \left( \frac{  r \left( 1 + 2 \frac{\epsilon}{\sigma} \right)^2  - \delta  }{r-1} \right)^{r-1} 
 \leq \delta   \left( \frac{ r\left( 1+\frac{1}{r} \right) - \delta}{r-1} \right)^{r-1} 
 \leq \delta   \left( 1 + \frac{2}{r-1} \right)^{r-1} 
 \leq e^2 \delta , 
\] 
which implies $\delta \geq \frac{e^{-1/4}}{e^2} \geq \frac{1}{10}$. 
The first inequality follows from the nonincreasingness of $\left( 1 +  \frac{1}{8r} \right)^{-2r}$ and the limit to infinity being $e^{-1/4}$. 
The last inequality follows from the nondecreasingness of $\left( 1 + \frac{2}{r-1} \right)^{r-1}$ and 
the limit to infinity being $e^{2}$. 
\end{proof}

\begin{remark}
The lower bound for $\lambda_{\min}(C^*)$ can be improved to $\frac{1}{4}$ showing that 
\[
0.84 \leq \left( 1 + \frac{1}{8 r \sqrt{r}} \right)^{-2r} \leq  \det(C^*), 
\]
and
\[  
\det(C^*) \leq \delta \left( \frac{  r }{r-1}  \left( 1 + 2 \frac{\epsilon}{\sigma} \right)^2 \right)^{r-1} 
\leq \delta \left( \frac{  r }{r-1}  \left( 1 + \frac{1}{4 r \sqrt{r}} \right)^2 \right)^{r-1} 
 \leq 3 \delta
\]
 for any $r \geq 2$. %The third inequality follows from $\sum_{k=1}^r \lambda_k(C^*)  \leq r \left( 1 + \sqrt{\lambda_{\max}(C^*)}  \frac{\epsilon}{\sigma} \right)^2 \leq r (1+2\frac{\epsilon}{\sigma} )^2$. 
\end{remark}

For sufficiently small noise level, we can now provide an upper bound for the condition number of~$C^*$. 
\begin{theorem} \label{lemkappa} 
If $\tilde{M}$ satisfies Assumption~\ref{ass1} with $m = r$ and 
\[
\epsilon \leq \frac{\sigma_{\min}(W)}{8 r \sqrt{r} }, 
\] 
then $\kappa(C^*) \leq 40$ where $C^*$ is the optimal solution of \eqref{SDPc}. 
\end{theorem}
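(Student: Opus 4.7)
The plan is to observe that Theorem~\ref{lemkappa} is essentially a direct consequence of the two preceding lemmas, which have already done all the real work. Since $C^*$ is feasible (hence symmetric positive semidefinite) and in fact positive definite by Lemma~\ref{lemmin2}, its singular values coincide with its eigenvalues, so $\kappa(C^*) = \lambda_{\max}(C^*)/\lambda_{\min}(C^*)$.

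First I would invoke Lemma~\ref{lammax}, which, under the hypothesis $\epsilon \leq \sigma_{\min}(W)/(8r\sqrt{r})$ and assuming $r \geq 2$, gives $\lambda_{\max}(C^*) \leq 4$. Next I would invoke Lemma~\ref{lemmin2}, which under the same hypothesis gives $\lambda_{\min}(C^*) \geq 1/10$. Dividing these two bounds yields $\kappa(C^*) \leq 40$. For the edge case $r = 1$ not covered by those lemmas, $C^*$ is a positive scalar and so $\kappa(C^*) = 1 \leq 40$ trivially; I would dispose of this case in one line at the start of the argument.

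There is no genuine obstacle here: the hard work — constructing a feasible point to lower bound $\det(C^*)$ (Lemma~\ref{lemlwb}), controlling $\lambda_{\max}(C^*)$ via the perturbed identity-column constraints (Lemmas~\ref{lemmax11} and~\ref{lammax}), and squeezing $\lambda_{\min}(C^*)$ between a determinant lower bound and a product-of-eigenvalues upper bound (Lemma~\ref{lemmin2}) — has all been carried out. The only thing to be mildly careful about is the definition of the condition number: one should confirm that, because $C^* \in \mathbb{S}^r_{++}$, the ratio $\sigma_{\max}(C^*)/\sigma_{\min}(C^*)$ that defines $\kappa(C^*)$ coincides with $\lambda_{\max}(C^*)/\lambda_{\min}(C^*)$, which is immediate from the spectral theorem. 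The theorem statement then follows by direct substitution.
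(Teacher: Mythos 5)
Your proposal is correct and is exactly the paper's own argument: the proof is the one-line combination of Lemma~\ref{lammax} and Lemma~\ref{lemmin2}, giving $\kappa(C^*) = \lambda_{\max}(C^*)/\lambda_{\min}(C^*) \leq 4 \cdot 10 = 40$ (the paper disposes of the trivial $r=1$ case in the surrounding text, just as you do). Nothing further is needed.
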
 
\begin{proof}
This follows from Lemmas~\ref{lammax} and~\ref{lemmin2}. 
\end{proof}

\subsection{Robustness of Preconditioned SPA}

The upper bound on $\kappa(C^*)$ (Theorem~\ref{lemkappa}) proves that the preconditioning generates a well-conditioned near-separable matrix, leading to more robust near-separable NMF algorithms. In particular, it makes preconditioned SPA significantly more robust to noise than SPA. 
\begin{theorem} \label{mainth} 
Let $M$ satisfy Assumption~\ref{ass1} with $m = r$ and let $Q \in \mathbb{R}^{r \times r}$ be such that $A^* = Q^TQ$ where $A^*$ is the optimal solution of \eqref{SDPp}.  
If $\epsilon \leq \mathcal{O} \left( \frac{\sigma_{\min}(W)}{r \sqrt{r} } \right)$, SPA applied on matrix $QM$ identifies indices corresponding to the columns of $W$ up to error  $\mathcal{O} \Big(  \epsilon \kappa(W)  \Big)$. 
\end{theorem}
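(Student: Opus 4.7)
The plan is to simply combine Theorem~\ref{lemkappa} with Corollary~\ref{cor1}, so the real work has already been done: it remains only to translate the bound on $\kappa(C^*)$ into bounds on the various quantities that appear in Corollary~\ref{cor1}, namely $\sigma_{\min}(QW)$, $\kappa(QW)$, $\sigma_{\max}(Q)$, and $\kappa(Q)$.

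First, I would observe the key identity $C^* = W^T A^* W = W^T Q^T Q W = (QW)^T(QW)$, so that $\sigma_i^2(QW) = \lambda_i(C^*)$. Invoking Theorem~\ref{lemkappa} (whose hypothesis $\epsilon \leq \sigma_{\min}(W)/(8r\sqrt{r})$ is exactly the hypothesis of the present theorem, up to the constant hidden in the $\mathcal{O}$), this immediately yields $\kappa(QW) \leq \sqrt{40}$ and, using Lemma~\ref{lemmin2}, $\sigma_{\min}(QW) \geq 1/\sqrt{10}$; both are absolute constants.

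Next, I would bound $\sigma_{\max}(Q)$ and $\kappa(Q)$ through the change-of-variable relation $A^* = W^{-T} C^* W^{-1}$. Using standard singular value inequalities,
\[
\sigma_{\max}(Q)^2 = \lambda_{\max}(A^*) \leq \sigma_{\max}(W^{-1})^2 \, \lambda_{\max}(C^*) \leq \frac{4}{\sigma_{\min}(W)^2},
\]
where I used Lemma~\ref{lammax}. Similarly $\sigma_{\min}(Q)^2 = \lambda_{\min}(A^*) \geq \lambda_{\min}(C^*)/\sigma_{\max}(W)^2 \geq 1/(10\,\sigma_{\max}(W)^2)$ by Lemma~\ref{lemmin2}, giving $\kappa(Q) \leq 2\sqrt{10}\,\kappa(W)$.

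Finally, plugging these estimates into Corollary~\ref{cor1}, the sufficient noise condition becomes
\[
\epsilon \leq \mathcal{O}\!\left(\frac{\sigma_{\min}(QW)}{\sqrt{r}\,\sigma_{\max}(Q)\,\kappa^2(QW)}\right) = \mathcal{O}\!\left(\frac{\sigma_{\min}(W)}{\sqrt{r}}\right),
\]
which is implied by the assumed bound $\epsilon \leq \mathcal{O}(\sigma_{\min}(W)/(r\sqrt{r}))$ needed for Theorem~\ref{lemkappa}; and the error bound becomes $\mathcal{O}(\epsilon\,\kappa(Q)\,\kappa(QW)^2) = \mathcal{O}(\epsilon\,\kappa(W))$, as claimed. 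The only nontrivial step in this assembly was already handled in Theorem~\ref{lemkappa}; the present argument is purely bookkeeping, with no serious obstacle beyond carefully tracking how $\sigma_{\min}(W)$ and $\sigma_{\max}(W)$ propagate through $A^* = W^{-T}C^*W^{-1}$.
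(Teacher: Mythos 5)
Your proposal is correct and follows essentially the same route as the paper: combine Corollary~\ref{cor1} with the bounds $\lambda_{\min}(C^*)\geq 1/10$, $\lambda_{\max}(C^*)\leq 4$, $\kappa(C^*)\leq 40$ via the identity $(QW)^T(QW)=C^*$, and deduce $\sigma_{\max}(Q)\leq 2/\sigma_{\min}(W)$ and $\kappa(Q)\leq 2\sqrt{10}\,\kappa(W)$. The only cosmetic difference is that you bound the singular values of $Q$ through the eigenvalues of $A^*=W^{-T}C^*W^{-1}$, whereas the paper factors $Q=(QW)W^{-1}$ and uses submultiplicativity; the constants and conclusion are identical.
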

\begin{proof}
This follows from Corollary~\ref{cor1}, Lemmas~\ref{lammax} and~\ref{lemmin2}, and Theorem~\ref{lemkappa}. In fact, let $Q$ be such that $A^* = Q^TQ$ where $A^*$ is the optimal solution of \eqref{SDPp}. Let also $C^* = W^T A^* W$ be the optimal solution of \eqref{SDPc}. 
We have that 
\[
(QW)^T QW = W^T Q^T Q W = W^T A^* W = C^*, 
\]
hence $\sigma_i(QW) = \sqrt{\lambda_i(C^*)}$ for all $i$ implying 
\[
\sigma' = \sigma_{\min}(QW) = \sqrt{\lambda_{\min}(C^*)} \geq \frac{1}{\sqrt{10}}, \qquad (Lemma~\ref{lemmin2})  
\] 
\[
\kappa' = \kappa(QW) = \sqrt{\kappa(C^*)} \leq \sqrt{40},  \qquad (Theorem~\ref{lemkappa}) 
\] 
\[
\sigma_{\min}(Q)  = \sigma_{\min}(QW W^{-1})  \geq \sigma_{\min}(QW) \sigma_{\min}(W^{-1}) \geq \frac{1}{\sqrt{10}\sigma_{\max}(W)}, 
\]
\[
\sigma_{\max}(Q) = \sigma_{\max}(QW W^{-1}) \leq \sigma_{\max}(QW) \sigma_{\max}(W^{-1}) \leq \frac{2}{\sigma_{\min}(W)}, \qquad (Lemma~\ref{lammax})  
\]
hence $\kappa(Q) \leq  2 \sqrt{10}  \kappa(W)$. 
\end{proof}

\begin{remark}[Tightness of the bound] \label{rem3} 
As explained in the introduction, the dependence on $\sigma_{\min}(W)$ for SPA is unavoidable. 
However, it is not clear whether the dependence on $r$ derived in Theorem~\ref{mainth} is optimal, this is a question for further research. 
(Note that tightness of the bounds in Theorem~\ref{th1} is also an open question.)
\end{remark}

\section{Linear Dimensionality Reduction} \label{ldr}

In practice, the input matrix $M \in \mathbb{R}^{m \times n}$ of rank $r$ usually has more than $r$ rows (that is, $m > r$) and the matrix $M$ is rank deficient. A simple and natural way to handle this situation is to use a linear dimensionality reduction technique as a pre-processing. This is a standard trick in signal processing and in particular in hyperspectral unmixing; see, e.g., \cite{Ma14} and the references therein. 

%When $m > r$, a natural way to reduce the ambiant dimension $m$ is to use linear dimensionality reduction techniques. 
For example, one can use the truncated singular value decomposition (SVD). Given $\tilde{M} \in \mathbb{R}^{m \times n}$, an optimal rank-$r$ approximation of $\tilde{M}$ with respect to the Frobenius norm (and the 2-norm) is given by 
$\tilde{M}_r = U \Sigma V$ where $U \in \mathbb{R}^{m\times r}$ (resp.\@ $V \in \mathbb{R}^{r\times n}$) has orthonormal columns (resp.\@ rows) which are the first $r$ left (resp.\@ right) singular vectors of $\tilde{M}$, and $\Sigma = \diag\left(\sigma_1(\tilde{M}), \sigma_2(\tilde{M}), \dots, \sigma_r(\tilde{M})\right)$. 
Given the rank-$r$ truncated SVD $(U,\Sigma,V)$ of $M$, we can therefore use as the input near-separable matrix the matrix $\Sigma V \in \mathbb{R}^{r \times n}$. This linear dimensionality reduction amounts to replace the data points with their projections onto the linear space that minimizes the sum of the squared residuals.

\begin{remark}[Huge Scale Problems] \label{larger} If it is too expensive to compute the SVD, it is possible to use cheaper linear dimensionality reduction techniques such as random projections \cite{JW84, PRTV98}, or picking randomly a subset of the rows and then performing the SVD on the subset; see \cite{AB11} and the references therein. 
\end{remark}

\begin{remark}[Other Noise Models] Using the SVD for dimensionality reduction is ideal when the noise is Gaussian. When this is not the case (in particular if we have some information about the noise model), it would be beneficial to use other dimensionality reduction techniques. 
 \end{remark}

\subsection{Heuristic Preconditioning with SVD} \label{heurprec}

   It is interesting to observe that 
	\begin{enumerate}
	
	\item The pre-processing using the SVD described above is equivalent to pre-multiplying $\tilde{M}$ with $U^T$ because $U^T \tilde{M} = \Sigma V$. 
	
  \item %An `optimal' preconditioning is given by the left inverse $W^{\dagger}$ of $W$ (up to orthogonal transformations. 
Given the SVD $(U_W,\Sigma_W,V_W)$ of $W$, an optimal preconditioning is given by $Q = \Sigma_W^{-1} U_W^T$: in fact, 
$\kappa(Q W) = \kappa(\Sigma_W^{-1} U_W^T W)  = \kappa( V_W ) = 1$. 

\item The SVD of $M$ and $W$ are closely related (in fact, they share the same column space). Intuitively, the best  fitting linear subspace (w.r.t.\@ to the Frobenius norm) for the columns of $\tilde{M}$  should be close to the column space of $W$. 
%For example, if $M = [W \, W \, \dots \, W]$ (up to permutation), then the SVD of $M$ and $W$ coincide (up to a scaling factor).  

\end{enumerate} 
Therefore, if one wants to avoid solving the SDP \eqref{SDPp}, a heuristic to approximately estimate $\Sigma_W^{-1} U_W^T$ is to use the SVD of $\tilde{M}$, that is, given the rank-$r$  truncated  SVD $(U,\Sigma,V)$ of $\tilde{M}$, use the preconditioning $\Sigma^{-1} U^T$ (note that this is prewhitening).  
We have observed that $U_W$ and $U$ are usually close to one another (up to orthogonal transformations), while $\Sigma_W^{-1}$ and $\Sigma$ are if the data points are well distributed in $\conv(W)$; see some numerical experiments in Section~\ref{ne}. 
However, in the next section, we show how to solve the SDP \eqref{SDPp} for large $n$ which makes this heuristic less attractive (unless $r$ is large).

\section{Active-Set Algorithm for Large-Scale Problems} \label{as} 

% Selecting a Subset of the Columns of $M$ 

John \cite{john} proved that there exists a subset of the columns of $\tilde{M}$, say $\mathcal{S}$, with $r \leq |\mathcal{S}| \leq \frac{r(r+1)}{2}$ so that the problem \eqref{SDPp} is  equivalent to 
\begin{equation}
\min_{A \in \mathbb{S}^r_+} \quad \log \det(A)^{-1} \qquad \text{such that } \qquad \tilde{m}_i^T A \tilde{m}_i \leq 1 \; \text{ for } \; i \in \mathcal{S} .  \label{SDPS} 
\end{equation} 
For example, in the noiseless separable case, we have seen that there exists such a set with $|\mathcal{S}| = r$; see Theorem~\ref{precnoiseless}. (John \cite{john} also showed that if $\tilde{m}_i$ belongs to the convex hull of the other columns of $\tilde{M}$, then $i \notin \mathcal{S}$.)  
Although the set $\mathcal{S}$ is unknown in advance, we can try to make a good guess for $\mathcal{S}$, solve  \eqref{SDPS}, and then check whether all remaining columns of $\tilde{M}$ satisfy $\tilde{m}_i^T A \tilde{m}_i \leq 1$. If not, we can add some of them to the current set $\mathcal{S}$, remove columns with $\tilde{m}_i^T A \tilde{m}_i < 1$, and resolve \eqref{SDPS}; see Algorithm~\ref{sepnmf3} where $\eta$ is a parameter equal to the number of constraints kept in the active set.  
A similar active-set approach is described in \cite{SF04}. 
We propose to use SPA in order to find a good initial guess for $\mathcal{S}$; see Algorithm~\ref{initSPA}.  
Since SPA cannot extract more than $\min(m,n)$ indices of an $m$-by-$n$ matrix (because the residual has to be equal to zero after $\min(m,n)$ steps), we reinitialize SPA as many times as necessary, setting the columns corresponding to the indices already extracted to zero. 
\algsetup{indent=2em}
\begin{algorithm}[ht!]
\caption{Preconditioned SPA for Near-Separable NMF using SVD and an Active-Set Method \label{sepnmf3}}
\begin{algorithmic}[1]
\REQUIRE Near-separable matrix $\tilde{M} = WH + N \in \mathbb{R}^{m \times n}_+$ (Assumption~\ref{ass1}), number $r$ of columns to extract, number of active constraints $\frac{r(r+1)}{2}$$<$$\eta$$\leq$$n$, precision $\delta \geq 0$.  
\medskip 

\% \emph{(1) Linear Dimensionality Reduction using SVD} \\ 
\STATE Compute the truncated SVD 
$[U,\Sigma,V] \in \mathbb{R}^{m \times r} \times \mathbb{R}^{r \times r} \times \mathbb{R}^{n \times r}$ of $\tilde{M} \approx U \Sigma V^T$. 
\STATE Replace $\tilde{M} \leftarrow U^T \tilde{M} = \Sigma V^T$; \\ 
\% \emph{(2) Solve the SDP in the reduced space using an active-set method}
\STATE $\mathcal{S}$ = Algorithm 3$(\tilde{M}, \eta)$. \quad  \% \emph{Initialization using SPA} 
\STATE Compute the optimal solution $A$ of \eqref{SDPp} for matrix $\tilde{M}(:,\mathcal{S})$.  
%\WHILE {$\min_i(1-\tilde{m}_i^T A \tilde{m}_i) \leq -\epsilon$} 
\WHILE {$\max_i(\tilde{m}_i^T A \tilde{m}_i) \geq 1+\delta$} 
\STATE $\mathcal{S} \leftarrow \mathcal{S} \; \backslash \;  \{ i \in \mathcal{S} \ | \ \tilde{m}_i^T A \tilde{m}_i < 1 \}$.   \quad  \% \emph{Remove inactive constraints} 
\IF{$|\mathcal{S}| > \frac{r(r+1)}{2}$} 
  \STATE  \% \emph{Keep $\frac{r(r+1)}{2}$ `good' active constraints using SPA and $\max_i(\tilde{m}_i^T A \tilde{m}_i)$}  
	\STATE $\mathcal{S}$ = indices extracted by $\text{SPA}\left(\tilde{M}(:,\mathcal{S}),r\right)$; 
	\STATE Add $\frac{r(r+1)}{2}-r$ indices to $\mathcal{S}$  corresponding to the largest  $\tilde{m}_i^T A \tilde{m}_i$, $i \in \mathcal{S}$. 
	\ENDIF
\STATE Add $(\eta - |\mathcal{S}|)$ indices to the set $\mathcal{S}$   corresponding to the largest  $\tilde{m}_i^T A \tilde{m}_i$, $i \notin \mathcal{S}$. 
\STATE Compute the optimal solution $A$ of \eqref{SDPp} for matrix $\tilde{M}(:,\mathcal{S})$. 
\ENDWHILE \\
\% \emph{(3) Extract $r$ indices using SPA on the preconditioned matrix}
\STATE Compute $Q$ such that $A = Q^TQ$. 
\STATE $\mathcal{K}$ = SPA$(Q\tilde{M}, r)$ (see Algorithm~\ref{spa}).
\end{algorithmic}  
\end{algorithm} 

\begin{remark}[SDP with CVX] \label{solveSDP}
   We use CVX \cite{cvx, cvx2} to solve the SDP~\eqref{SDPp}: \vspace{-0.4cm}
	\lstinputlisting{cvx_ex.m} 
	The problem can be written directly in terms of the matrix $Q$ hence we do not need to perform a factorization at step 15 of Algorithm~\ref{sepnmf3}.    
 \end{remark}

\algsetup{indent=2em}
\begin{algorithm}[ht!]
\caption{Selection of an Initial Active Set using SPA \label{initSPA}}
\begin{algorithmic}[1]
\REQUIRE Matrix $M \in \mathbb{R}^{m \times n}$, number of indices to be extracted $K \leq n$. 
\ENSURE Set of indices $\mathcal{K} \subseteq \{1,2,\dots,n\}$ with $|\mathcal{K}| = K$. 
\medskip 

\STATE $R = M$; $\mathcal{K} = \{\}$. 
\WHILE{$|\mathcal{K}| < K$}
\STATE  $\mathcal{I} = \text{SPA}(R, K-|\mathcal{K}| )$. 
\STATE  $R(:,\mathcal{I}) = 0$. 
\STATE $\mathcal{K} = \mathcal{K} \cup \mathcal{I}$. 
\ENDWHILE
\end{algorithmic}  
\end{algorithm}

\section{Numerical Experiments} \label{ne}

In this section, we compare the following near-separable NMF algorithms on some synthetic data sets: 

\begin{enumerate}

\item \textbf{SPA}: the successive projection algorithm; see Algorithm~\ref{spa}.

\item \textbf{Post-SPA}: SPA post-processed with the procedure proposed by Arora et al.~\cite{Ar13}; see Section~\ref{spasec}.

\item \textbf{Prec-SPA}: preconditioned SPA; see Algorithm~\ref{sepnmf3} (we use $\delta = 10^{-6}$ and $\eta = \frac{r(r+1)}{2} + r$ for the remainder of the paper). 

\item \textbf{Post-Prec-SPA}: Prec-SPA post-processed  with the procedure proposed by Arora et al.~\cite{Ar13}; see Section~\ref{spasec}.

\item \textbf{Heur-SPA}: SPA preconditioned with the SVD-based heuristic; see Section~\ref{heurprec}: given the rank-$r$ truncated SVD $(U,\Sigma,V)$
of $\tilde{M}$, use the preconditioning $\Sigma^{-1} U^T$ and then apply SPA.

\item \textbf{VCA}: vertex component analysis, a popular endmember extraction algorithm proposed in~\cite{ND05}.

\item \textbf{XRAY}: recursive algorithm similar to SPA, but taking into account the nonnegativity constraints for the projection step~\cite{KSK12}. (We use in this paper the variant referred to as $max$.)

\end{enumerate} 
We also show that Prec-SPA can be applied to large-scale real-world hyperspectral images. \\ 

The Matlab code is available at \url{https://sites.google.com/site/nicolasgillis/}.  All tests are preformed using Matlab on a laptop Intel CORE i5-3210M CPU @2.5GHz 2.5GHz 6Go RAM. \\

\subsection{Synthetic Data Sets}  

In this section, we compare the different algorithms on some synthetic data sets.

\subsubsection{Middle Points Experiment} \label{mpsec}

We take $m = r = 20$, and $n = 210$. The matrix $W$ is generated using the \texttt{rand(.)}~function  of Matlab, that is, each entry is drawn uniformly at random in the interval $[0,1]$. 
The matrix $H =[I_r, H']$ is chosen such that the last columns of $M$ are located in between two vertices: $H'$ has exactly two non-zero entries in each row equal to 0.5. 
Hence, besides the 20 columns of $W$, there are $\binom{20}{2} = 190$ data points located in the middle of two different columns of $W$ for a total of 210 columns of $M$.  
The noise is chosen such that the columns of $W$ (that is, the first 20 columns of $M$) are not perturbed, while the 190 middle points are moved towards the outside of the convex hull of the columns of $W$: 
\[
N = \epsilon \; [0_{20 \times 20}, M(:,21)-{\bar{w}}, M(:,22)-{\bar{w}}, \dots, M(:,n)-\bar{w}], 
\]
where $M = WH$ and $\bar{w} = \frac{1}{r} \sum_i w_i$ is the vertex centroid of the convex hull of the columns of $W$. These are the same near-separable matrices as in \cite{GV12}. This makes this data set particularly challenging: for any $\epsilon  > 0$, no data point is contained in the convex hull of the columns of $W$. 

For each noise level (from 0 to 0.6 with step 0.01), we generate 100 such matrices, and Figure~\ref{xp1} reports the fraction of correctly extracted columns of $W$.  
\begin{figure}[ht!]
\begin{center}
\includegraphics[width=\textwidth]{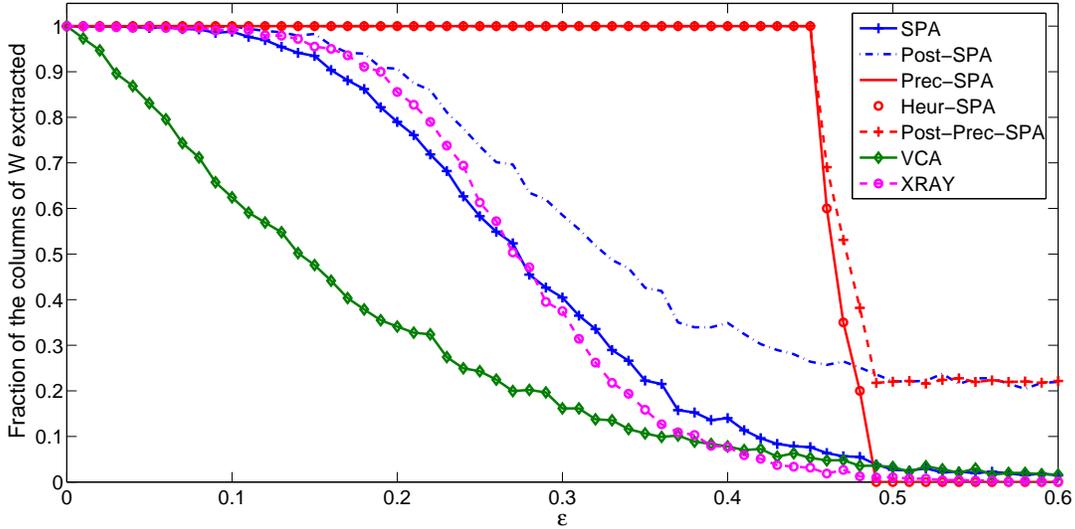}
\caption{Comparison of the different near-separable NMF algorithms on the `Middle Points' experiment.} 
\label{xp1}
\end{center}
\end{figure} 

We observe that 
\begin{itemize}

\item As observed in \cite{GV12}, VCA is not robust to noise. 

\item As observed in \cite{GL13},  XRAY and SPA perform similarly.  

\item Heur-SPA and Prec-SPA perform exactly the same. As explained in Section~\ref{ldr}, the reason is that the data points are rather well distributed and the SVD's of $W$ and $M$ are close to one another.

\item  Prec-SPA, Heur-SPA and Post-Prec-SPA are significantly more robust to noise than all other algorithms as they identify correctly all columns of $W$ for the largest noise levels; see also Table~\ref{robtim}. 
This confirms our theoretical findings that the preconditioning makes SPA significantly more robust to noise. 
This can be explained as follows: 
As long as the middle points remain inside the minimum volume ellipsoid containing the columns of $W$, the preconditioned algorithms perform perfectly (the solution of the SDP~\eqref{SDPp} actually remains unchanged). Moreover, when the middle points hit the border of the ellipsoid, there is a sharp change in the extracted columns (for $\epsilon \in [0.45,0.5]$). 

\item As predicted by Theorem~\ref{th2}, Post-SPA (resp.\@ Post-Prec-SPA) performs slightly better than SPA (resp.\@ Prec-SPA). 
Note that, for even larger noise levels which are not shown on Figure~\ref{xp1} (large enough so that the columns of $W$ are contained in the convex hull of the other columns), no column of $W$ will be extracted by Post-SPA and Post-Prec-SPA 
(more precisely, for $\epsilon \gtrsim 1.1$). 
\end{itemize}

Table~\ref{robtim} gives the robustness and the average running time of the different algorithms. Prec-SPA and Post-Prec-SPA are the slowest because they have to solve the SDP \eqref{SDPp}. XRAY is the second slowest because it has to solve nonnegative least squares problems at each step. 
\begin{table}[ht!] 
\begin{center}
\caption{Robustness (that is, largest value of $\epsilon$ for which the indicated percent of the columns of $W$ are correctly identified) and average running time in seconds of the different near-separable NMF algorithms on the `Middle Points' experiment.} 
\begin{tabular}{|c|c|c|c|}
\hline
  & Robustness (100\%) & Robustness (95\%) &  Time (s.) \\ \hline 
SPA  &  0.01 & 0.13 & $3.4^*10^{-3}$  \\ 
Post-SPA &  0.03 & 0.16 & $1.8^*10^{-2}$  \\  
Prec-SPA &  \textbf{0.45} & \textbf{0.45} & 2.8 \\ 
Heur-SPA &  \textbf{0.45} & \textbf{0.45}  & $2.4^*10^{-2}$ \\ 
Post-Prec-SPA &  \textbf{0.45} & \textbf{0.45}  &  2.8 \\ 
VCA & 0 &  0.02 & $2.3^*10^{-2}$  \\ 
XRAY & 0.01 & 0.15  & 0.69 \\ \hline
\end{tabular}
\label{robtim}
\end{center}
\end{table}

\subsubsection{Middle Points Experiment with Gaussian Noise} \label{mpsec2}

In the previous section, we explored a particularly challenging case where data points are located in between vertices and perturbed towards the outside of the convex hull.  
In this section, we explore a similar experiment with $m > r$ and using Gaussian noise. 
It is interesting to observe that running the `Middle Points' experiment  with $m > r$ gives the same outcome: in fact, by construction, the columns of the noise matrix $N$ belong to the column space $\col(W)$ of $W$. 
Therefore, the SVD pre-processing used by the different preconditionings does affect the geometry of the data points as it provides an exact decomposition of the data set since $\col(\tilde{M}) = \col(M) = \col(W)$ with dimension $r$. 

Therefore, in this section, in order to compare the performances of the different algorithms in case $m > r$, 
we also add Gaussian noise to the data points of the `Middle Points' experiment. 
More precisely, we use exactly the same settings as in Section~\ref{mpsec} except 
that $m = 30 > r = 20$ and the noise is generated as follows 
\[
N = 0.9 \cdot \epsilon \; [0_{30 \times 20}, M(:,21)-{\bar{w}}, M(:,22)-{\bar{w}}, \dots, M(:,n)-\bar{w}] 
+ 0.1 \cdot \epsilon \; Z , 
\]
where $\bar{w}$ is the vertex centroid of the columns of $W$ (see above), $Z(i,j) \sim$ N(0,1) for all $i,j$ with N(0,1) being the normal distribution with zero mean and standard deviation of one 
(we used the \texttt{randn(m,n)}~function of Matlab).

For each noise level (from 0 to 1 with step 0.01), we generate 100 such matrices, and Figure~\ref{xp2} reports the fraction of correctly extracted columns of $W$.  
\begin{figure}[ht!]
\begin{center}
\includegraphics[width=\textwidth]{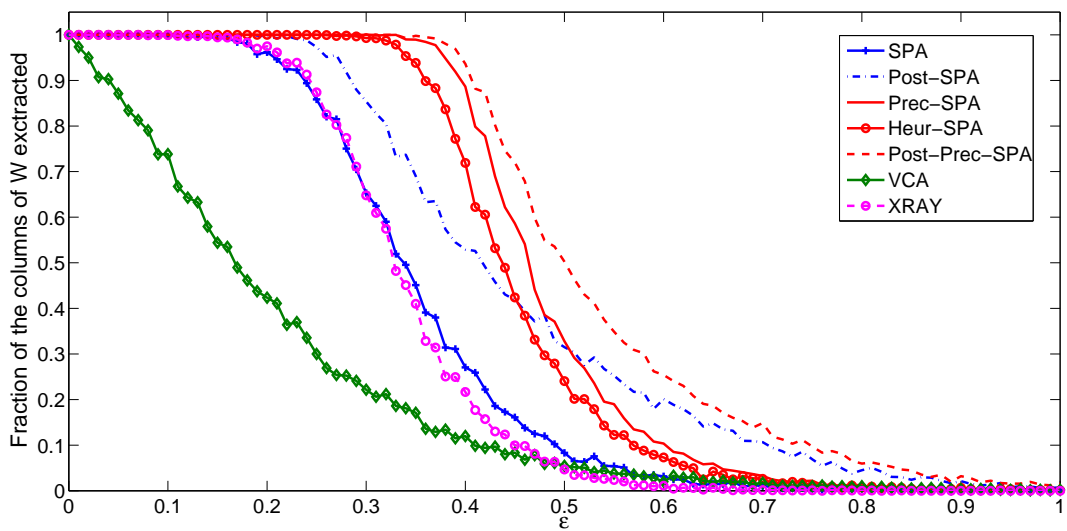}
\caption{Comparison of the different near-separable NMF algorithms on the `Middle Points' experiment with Gaussian noise.} 
\label{xp2}
\end{center}
\end{figure} 
Table~\ref{robtim2} gives the robustness and the average running time of each algorithm.  
\begin{table}[ht!] 
\begin{center}
\caption{Robustness (that is, largest value of $\epsilon$ for which the indicated percent of the columns of $W$ are correctly identified) and average running time in seconds of the different near-separable NMF algorithms on the `Middle Points' experiment with Gaussian noise.} 
\begin{tabular}{|c|c|c|c|}
\hline
     & Robustness (100\%) & Robustness (95\%) &  Time (s.) \\ \hline 
SPA           &  0.09   & 0.21   & $2.6^*10^{-3}$  \\ 
Post-SPA      &  0.18   & 0.27   & $2.2^*10^{-2}$  \\  
Prec-SPA      &  0.30   & 0.38   & 2.5 \\ 
Heur-SPA      &  0.25   & 0.34   & $3.1^*10^{-2}$ \\ 
Post-Prec-SPA &  \textbf{0.33}   & \textbf{0.40} &  2.5 \\ 
VCA           & 0       &  0.02  & $9.5^*10^{-2}$  \\ 
XRAY          & 0.04    &  0.21  & 0.5 \\ \hline
\end{tabular}
\label{robtim2}
\end{center}
\end{table}

It is interesting to observe that, in terms of robustness, there is now a clear hierarchy between the different algorithms, 
the one predicted by the theoretical developments: 
\[
\text{Post-Prec-SPA} 
\succ  
\text{Prec-SPA}
\succ  
\text{Heur-SPA}
\succ  
\text{Post-SPA}
\succ  
\text{SPA}
\succ  
\text{XRAY}
\succ  
\text{VCA}, 
\]
where $a \succ b$ indicates that $a$ is more robust than $b$. 
In particular, the post-processing is clearly advantageous, while the SDP-based preconditioning dominates the SVD-based heuristic variant.

\subsubsection{Hubble Telescope} \label{hubsec} 

In hyperspectral imaging, the columns of the input matrix are the spectral signatures of the pixels present in the image. Under the linear mixing model, the spectral signature of each pixel is equal to a linear combination of the spectral signatures of the materials present in the image (referred to as endmembers). The weights in the linear combinations represent the abundances of the endmembers in the pixels. 
If for each endmember, there is a least one pixel containing only that endmember, the separability assumption is satisfied: this is the so-called pure-pixel assumption un hyperspectral imaging. Therefore, under the linear mixing model and the pure-pixel assumptions, hyperspectral unmixing is equivalent to near-separable NMF and the aim is to identify one pure pixel per endmember; see \cite{GV12} and the references therein for more details. 

In this section, we analyze the simulated Hubble telescope hyperspectral image with 100 spectral bands and $128 \times 128$ pixels \cite{PPP06} (that is, $m = 100$ and $n = 16384$);  see Figure~\ref{hubblef}. 
It is composed of eight materials (that is, $r = 8$);  see Figure~\ref{materials}.  
\begin{figure}[ht!] 
\begin{center}
\begin{tabular}{cc}
\includegraphics[height=4.75cm]{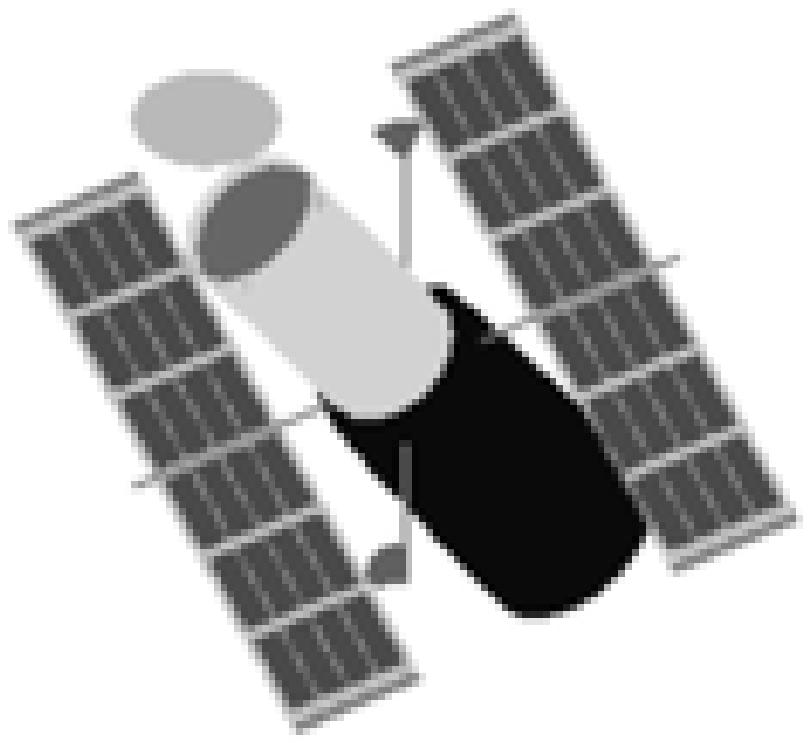} \hspace{1cm}
& \includegraphics[height=4.75cm]{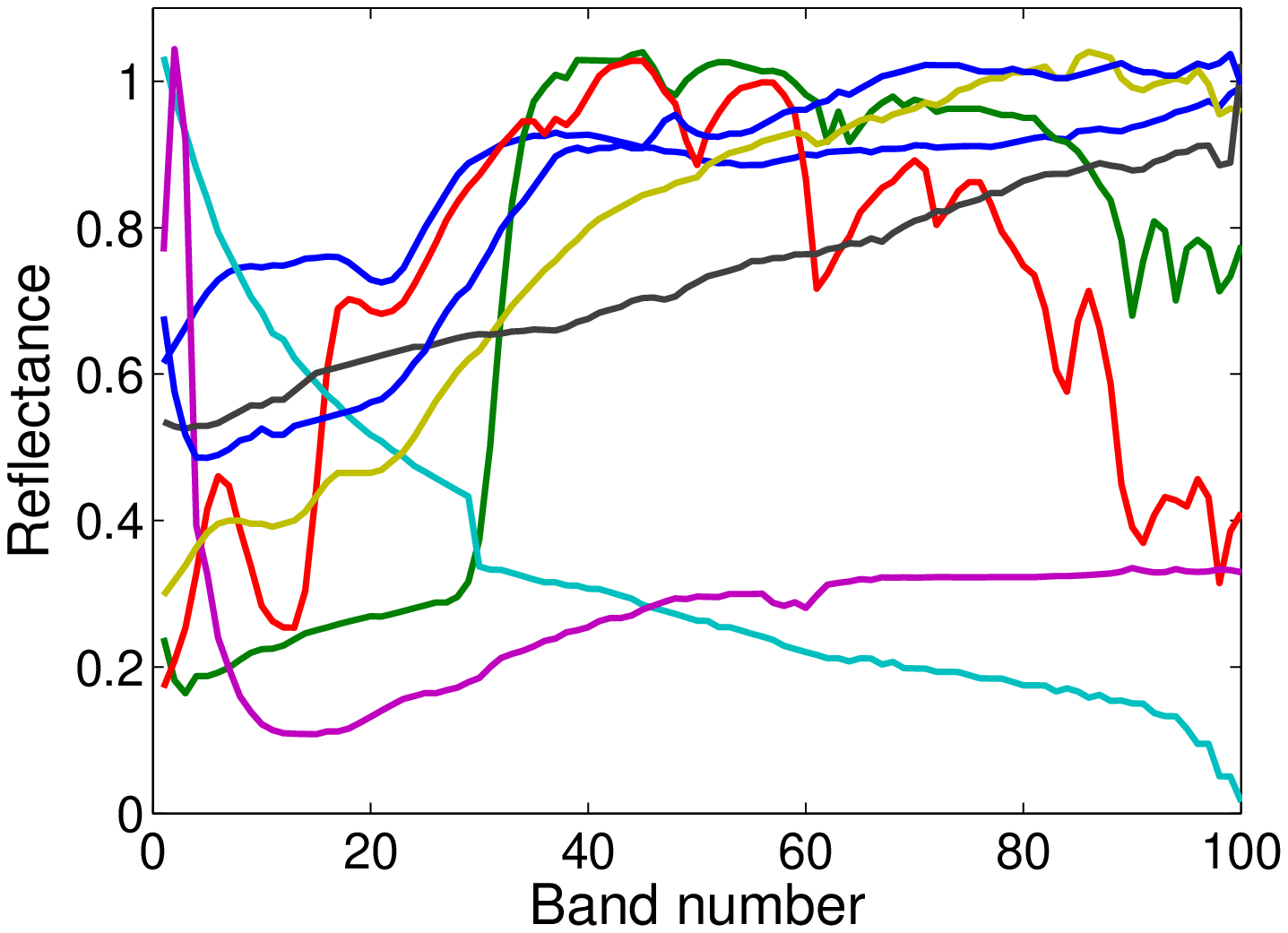} \\
\end{tabular}
\caption{On the left: sample image from the Hubble telescope hyperspectral image. On the right: spectral signatures of the eight endmembers.}
\label{hubblef}
\end{center}
\end{figure} 
\begin{figure}[H]
\begin{center}
\includegraphics[width=\textwidth]{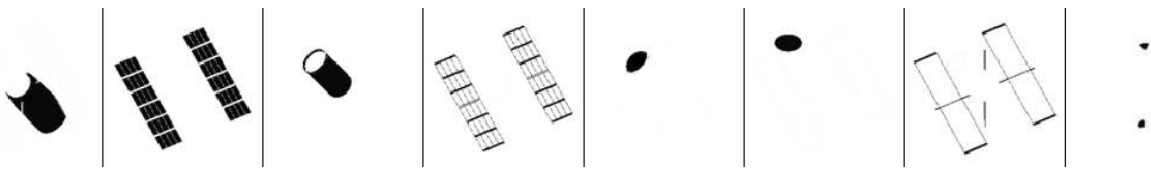}
\caption{The eight materials for the Hubble telescope data provided by the NASA Johnson Space Center. From left to right: aluminum, solar cell, green glue, copper stripping, honeycomb side, honeycomb top, black rubber edge and bolts.}
\label{materials}
\end{center}
\end{figure}

  On the clean image, all the SPA variants listed above are able to recover the eight materials perfectly\footnote{Note that VCA and XRAY are not able to identify all materials perfectly.}. 
	We add blur and noise as in \cite{PPP06} (point spread function on 5-by-5 pixels and with standard deviation of 1, and white Gaussian noise $\sigma = 1\%$ of the values of $M$ and Poisson noise $\sigma = 1\%$ of the mean value of $M$), and run all the algorithms. We then compute the mean-removed spectral angle (MRSA) between the true endmembers (of the clean image) and the extracted endmembers. Given two spectral signatures, $x, y \in \mathbb{R}^m$, the MRSA is defined as 
\begin{equation} \nonumber 
\phi(x,y) 
= \frac{100}{\pi}
\arccos \left( \frac{ (x-\bar{x})^T (y-\bar{y}) }{||x-\bar{x}||_2 ||y-\bar{y}||_2} \right) \quad \in \quad [0,100],  
\end{equation}
where, for a vector $z \in \mathbb{R}^m$,  $\bar{z} = \frac{1}{m} \left(\sum_{i=1}^m z(i)\right) e$ and $e$ is the vector of all ones. The MRSA measures how close two endmembers are (neglecting scaling and translation); 0 meaning that they match perfectly, 100 that they do not match at all.

  Table~\ref{mrsatim} reports the results, 	along with the running time of the different algorithms.  
		    \begin{table}[ht!]
\begin{center}
\caption{MRSA of the identified endmembers with the true endmembers, and running time in seconds of the different near-separable NMF algorithms.} 
\begin{tabular}{|c|cccccc|}
\hline
			&     SPA  & Post-SPA & Prec-SPA &  Heur-SPA & VCA & XRAY \\ \hline
Honeycomb side		& 	\textbf{6.51} & 6.94 & 6.94 & 6.94 &  48.70 &   45.78\\
 Copper Stripping&   26.83 & 7.46 & 7.46 & \textbf{7.44} &  48.98 &  46.93\\
Green glue & 2.09 & 2.09 & \textbf{2.03} & \textbf{2.03} &  48.30 &  42.70\\
Aluminum & \textbf{1.71} & 1.80 & 1.80 & 1.80  & 42.31 &  44.54\\
 Solar cell  &  \textbf{4.96} & 5.48 & 5.48 & 5.48  & 48.74 &   46.31\\
 Honeycomb top  &  2.34 & \textbf{2.07} & 2.30 & 2.30  & 49.90 &  53.23\\
 Black rubber edge  & 27.09 &  45.94 &  \textbf{13.16}  & \textbf{13.16}  & 47.21 &  46.55\\
 bolts &   \textbf{2.65} & \textbf{2.65} & \textbf{2.65} & \textbf{2.65} &  47.20 &  45.81\\ \hline 
	Average 	&	9.27   &  9.30  &   \textbf{5.23} &   \textbf{5.23} &   47.88 &   46.48 \\ \hline \hline 
 Time (s.)  &  0.05 & 1.45 & 4.74 & 2.18 & 0.37 & 1.53 \\ \hline  
\end{tabular}
\label{mrsatim}
\end{center}
\end{table} 
	Note that (i) we match the extracted endmembers with the true endmembers in order to minimize the average MRSA\footnote{We use the code available at \url{http://www.mathworks.com/matlabcentral/fileexchange/20328-munkres-assignment-algorithm}.}, and (ii) we do not include Post-Prec-SPA because it gave exactly the same solution as Prec-SPA. 
		
			\begin{figure}[ht]
\begin{center}
\includegraphics[width=\textwidth]{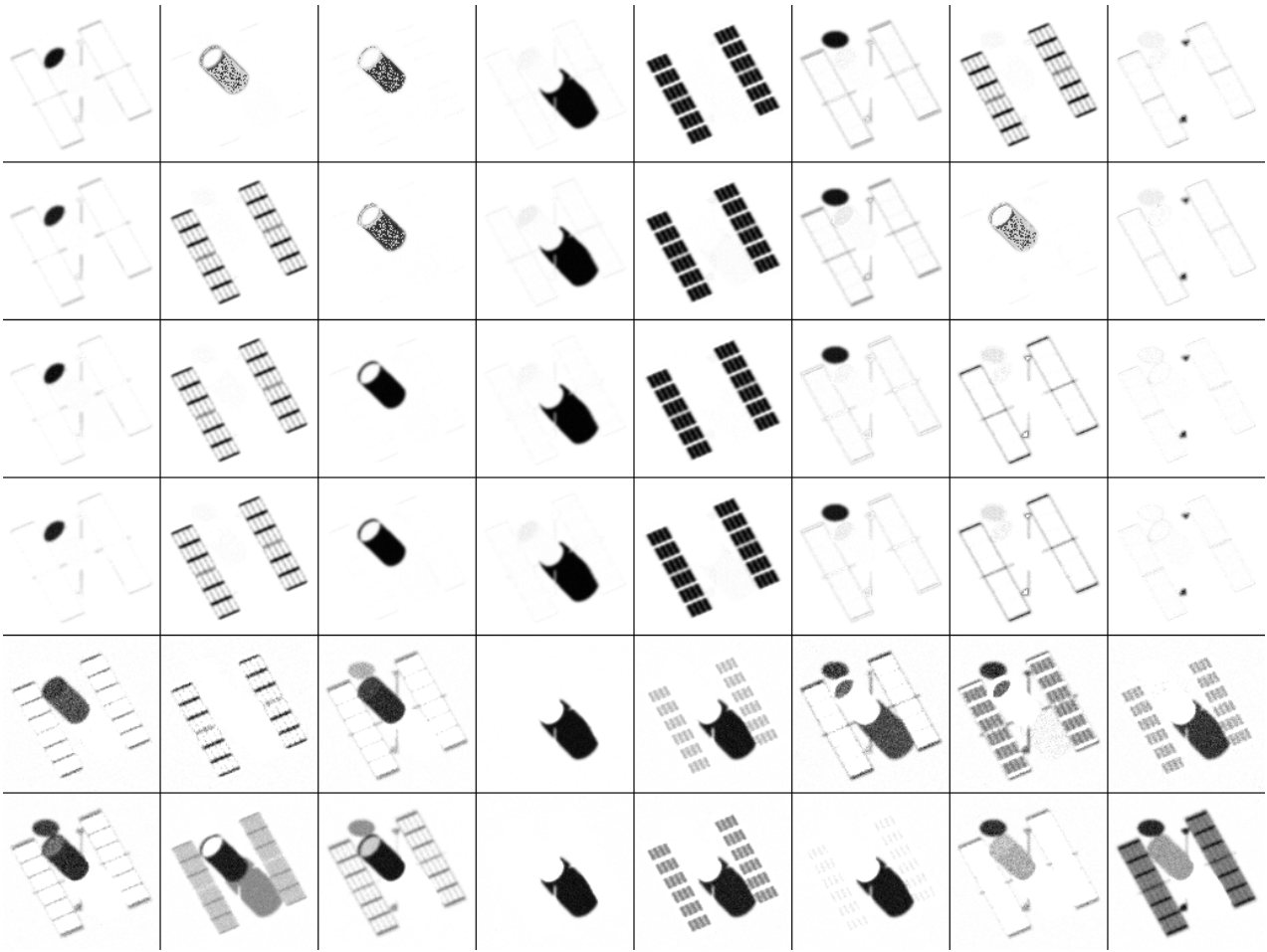}
\caption{The abundance maps corresponding to the extracted spectral signatures by the different algorithms. From top to bottom: SPA, Post-SPA, Prec-SPA, Heur-SPA, VCA and XRAY.} 
\label{abmap}
\end{center}
\end{figure} 
  We also computed the abundance maps corresponding to the extracted endmembers, that is, we solved the nonnegative least squares problem $H = \argmin_{X \geq 0} ||M-M(:,\mathcal{K})X||_F$ where $\mathcal{K}$ is the extracted index set by a given algorithm. This allows to visually assess the quality of the extracted endmembers; see  Figure~\ref{abmap}.

As for the synthetic data sets in Section~\ref{mpsec}, Prec-SPA and Heur-SPA perform the best. Moreover, for this data set, the running time of Prec-SPA is comparable to the one of Heur-SPA (because computing the SVD is more expensive; see  next section). Figure~\ref{pspa} displays the spectral signatures extracted by SPA and Prec-SPA, which shows that SPA is not able to identify one of the endmembers (in fact, one is extracted twice).  
\begin{figure}[ht!] 
\begin{center}
\begin{tabular}{cc}
\includegraphics[height=4.75cm]{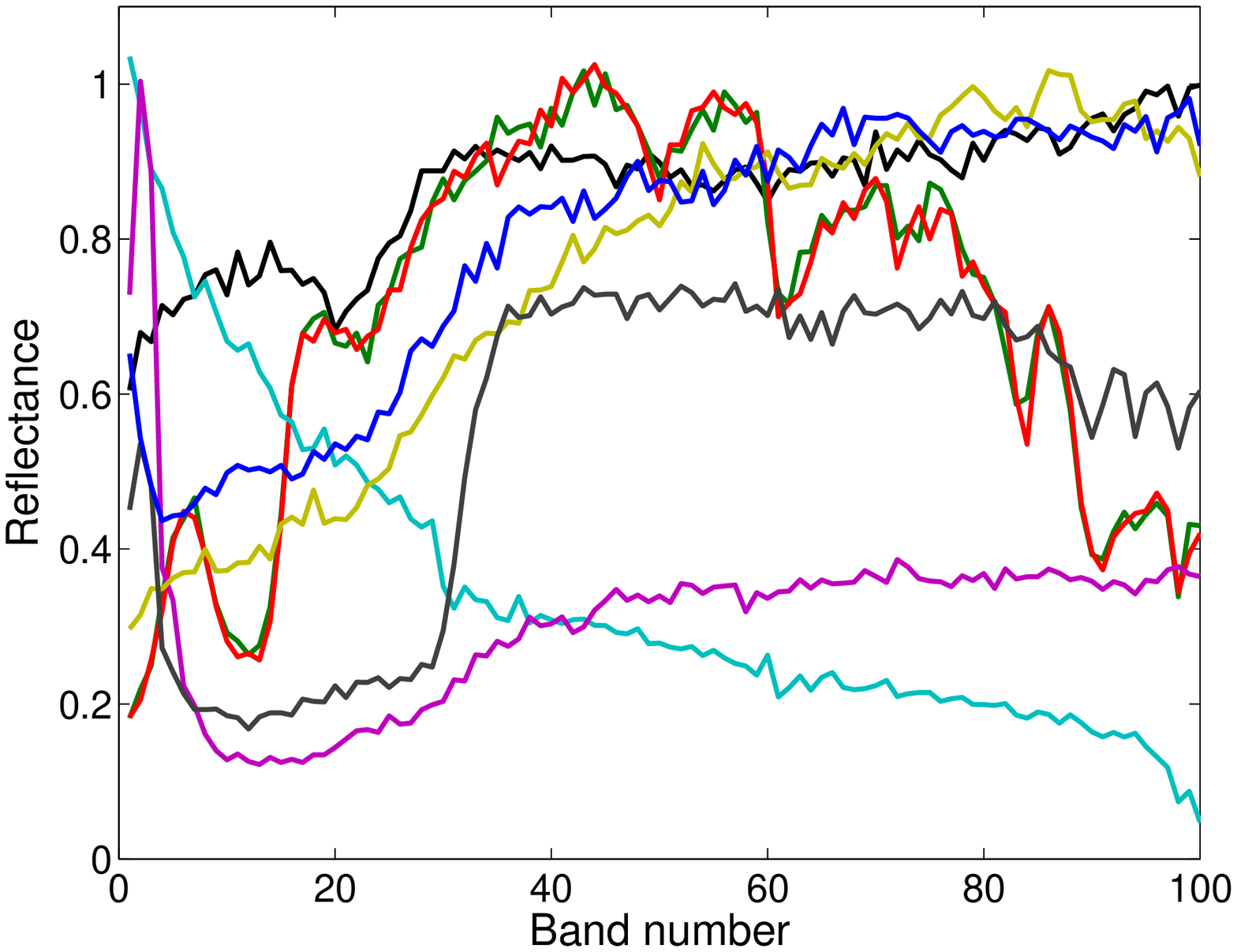} 
& \includegraphics[height=4.75cm]{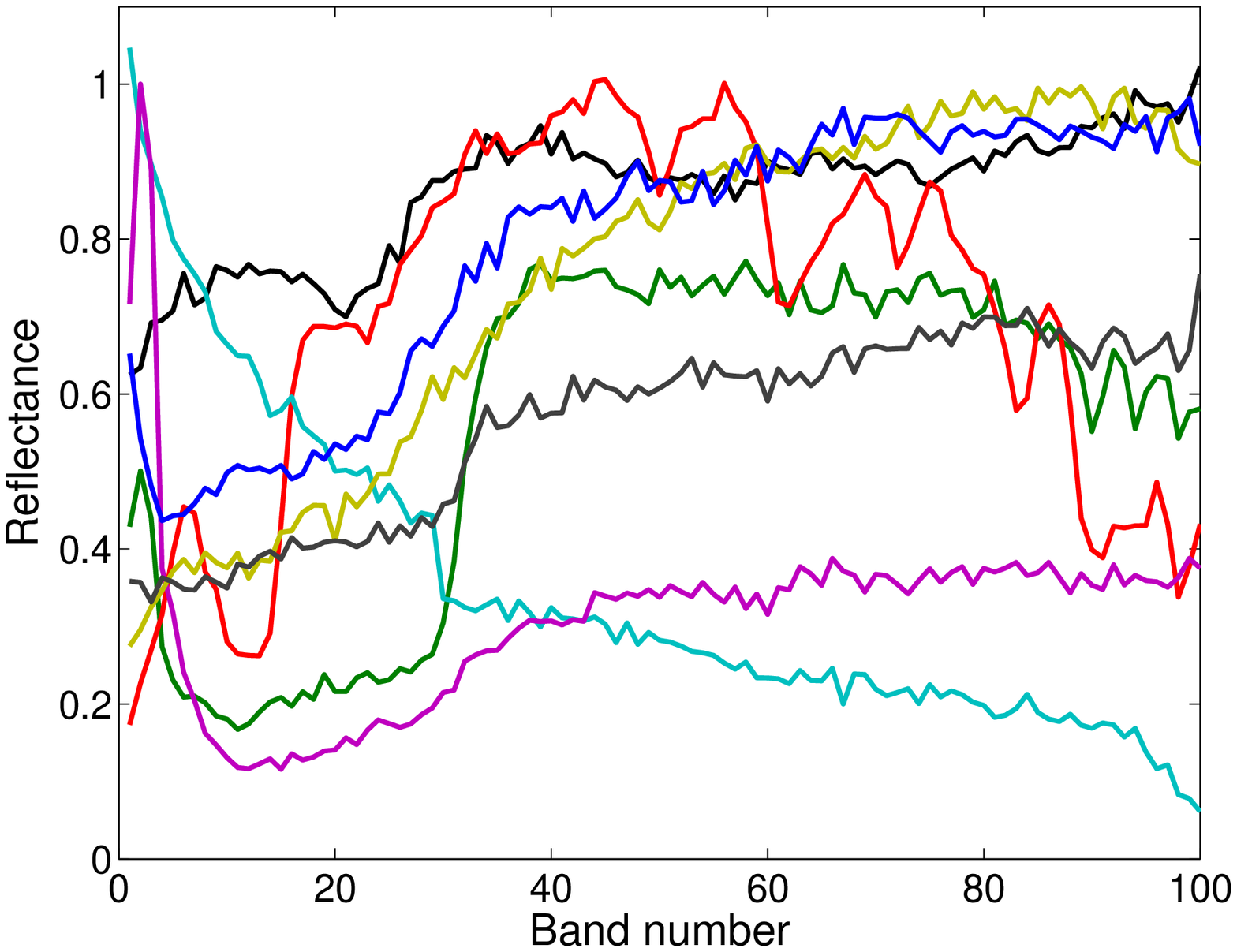} \\
\end{tabular}
\caption{Extracted endmembers by SPA (left) and Prec-SPA (right).}
\label{pspa}
\end{center}
\end{figure} 
It is also interesting to notice that, in this example, SPA performs slightly better than Post-SPA. (Although Post-SPA is guaranteed to identify a subset of data points whose convex hull has larger volume~\cite{Ar13}, it is not guaranteed, for large noise levels, that Post-SPA will approximate the columns of $W$ better.)

\subsection{Real-World Hyperspectral Images}

Assessing the quality of the extracted endmembers is rather challenging for real-world hyperspectral image because the true endmembers are unknown (in most cases, even the number of endmembers is unknown) while outliers and other artifacts in the image should be handled separately; see, e.g, 
\cite{Ma14} and the references therein.   
The aim of this section is therefore not to assess the performance of Prec-SPA on real-world hyperspectral images (this is out of the scope of this paper) but rather to show that the combination of the linear dimensionality reduction technique with the active-set method (that is, Algorithm~\ref{sepnmf3}) is applicable to this type of large-scale data sets. 
We report the running time of Algorithm~\ref{sepnmf3} for several popular hyperspectral images. We also report the difference between the solutions obtained with Prec-SPA and Heur-SPA (to illustrate the fact that they might behave quite differently in some cases); see Table~\ref{rw} for the numerical results. 
\begin{table}[ht!]
\begin{center} 
\caption{Running time in seconds for Prec-SPA (Algorithm~\ref{sepnmf3}) on several hyperspectral images: \emph{SVD} is the time in seconds needed to compute the truncated SVD of rank $r$, \emph{SDP} the time in seconds needed to solve the SDP~\eqref{SDPp} using the active-set method, \emph{Heur $\cap$ Prec} is the number of common indices extracted by Prec-SPA and Heur-SPA, and \emph{\# Active sets} is the number of changes in the active set needed for Algorithm~\ref{sepnmf3} to  terminate. }
\begin{tabular}{|c|c|c|c|c|c|c|c|}
\hline
			&      m  & n          & r      & SVD   & SDP  &  Prec  $\cap$ Heur  & \# Active sets \\ \hline
Urban$^*$  &    162 & 94249      & 6      & 7.3 & 1.7   & 6  & 3 \\
  &     			&  			       & 12     & 21.9 & 2.4   &  9  & 3 \\
  \hline 
San Diego$^{**}$ & 158   & 160000   & 8      & 17.1 & 1.5   & 8    & 2  \\ 
          &       &  			   & 16     & 32.1  & 7.2   &  13 &  4  \\ 
\hline 
 Cuprite$^{**}$ & 188 & 47750 & 15  & 11.0  & 6.6 &  6 & 5  \\ 
  &  &  							 & 30  & 18.2  & 58.3 & 3   &  4 \\ 
\hline
\end{tabular}

\begin{flushleft}
$^*$ \url{http://www.agc.army.mil}.

$^{**}$ \url{http://aviris.jpl.nasa.gov}. \vspace{-0.2cm}
\end{flushleft} 
\label{rw}
\end{center}
\end{table}

In all cases, less than five changes in the active set are necessary for Algorithm~\ref{sepnmf3} to terminate, 
which explains why the computational cost to solving the SDP is comparable to the one of computing the SVD. (Note that this will only be true for relatively small $r$.) It is interesting to observe that for some of these data sets Prec-SPA and Heur-SPA perform rather differently, especially when $r$ increases (for example, for the Cuprite data set with $r=30$ they only extract three common indices).

\section{Conclusion and Further Work}

In this paper, we have proposed a way to precondition near-separable NMF matrices using semidefinite programming. This in turn allowed us to robustify near-separable NMF algorithms. In particular, the preconditioning makes the popular successive projection algorithm (SPA)  provably more robust to noise, which we have illustrated on some synthetic data sets. Moreover, we showed how to apply the preconditioned SPA on real-world hyperspectral images using the SVD and an active-set method. 

For now, the preconditioning can be computed as long as (i) the SVD can be computed, and (ii) $r$ is not too large\footnote{With CVX, it took about 3 minutes to solve a problem with $m=r=50$ and $n=1000$ on a randomly generated matrix. For $r=100$, Matlab runs out of memory.}  (say $r \sim 50$). 
For larger data sets, several alternatives to the SVD are possible, such as random projections, and are interesting directions for further research; see also  Remark~\ref{larger}. For larger~$r$, it would be useful to develop faster SDP solvers, e.g., 
using first-order methods and/or 
using the particular structure of the minimum volume ellipsoid problem; see \cite{K93, K96, KY05, TY07, AST08} and the references therein. 

In the future, it would also be particularly interesting to assess the performance of preconditioned SPA on real-world problems (e.g., in hyperspectral unmixing and document classification), and evaluate the effect of the preconditioning on other algorithms (theoretically and/or practically). 
%Also, using other dimensionality reduction techniques depending on the data at hand

 \section*{Acknowledgments}
 
The authors would like to thank the editor and the reviewers for their insightful comments which helped improve the paper. \vspace{0.2cm}

{\small
\bibliographystyle{spmpsci} 
\bibliography{fastNMF}
}

\appendix 

\section{Proof for Lemma~\ref{lammax}} \label{app1} In this appendix, we prove the following: Let $r$ be any integer larger than $2$ and $\lambda \geq 0$ be such that  
\begin{equation} \label{shi}
(r-1)  
\left( 1 + \frac{1}{8r} \right)^{-\frac{2r}{r-1}}
\leq \lambda^{\frac{1}{r-1}} \left( r  + 1 - \lambda \right). 
\end{equation}
Then, $\lambda \leq 4$.

\begin{proof} 
The function $f(\lambda) = \lambda^{\frac{1}{r-1}} \left( r  +1 - \lambda \right)$ is nonincreasing for any $r \geq 2$ and for $\lambda \geq 2$; in fact, 
\[
\frac{d}{d \lambda}f(\lambda) 
= \frac{r+1-\lambda}{r-1} \lambda^{\frac{2-r}{r-1}} - \lambda^{\frac{1}{r-1}} 
= \lambda^{\frac{1}{r-1}} \left( \underbrace{\frac{r+1-\lambda}{r-1}}_{\leq 1} \underbrace{\lambda^{2-r}}_{\leq 1} - 1 \right)
 \leq 0 . 
\] 
Since the left-hand side of \eqref{shi} does not depend on $\lambda$, showing that the inequality above is violated for $\lambda = 4$ will imply that it is violated for any $\lambda \geq 4$ (hence $\lambda < 4$). It remains to show that 
\begin{equation} \nonumber 
(r-1)  
\left( 1 + \frac{1}{8r} \right)^{-\frac{2r}{r-1}}
> 4^{\frac{1}{r-1}} \left( r  - 3 \right) \quad  \text{ for any integer $r \geq 2$.} 
\end{equation} 
It can be checked numerically\footnote{The expression $\left( 1 + \frac{1}{8r} \right)^{-\frac{2r}{r-1}} - 4^{\frac{1}{r-1}} \left( r  - 3 \right)$ takes the values $4.78, 1.77, 1.18, 0.93, 0.80, 0.72, 0.66, 0.62, 0.59, 0.56, 0.55$ for $r = 2, 3, \dots, 12$ respectively.} that the inequality holds for any integer $r \leq 12$ hence it remains to show it holds for any $r \geq 13$. 
The inequality is equivalent to 
\[
\left( \frac{r-1}{r-3} \right)^{r-1} > 4  \left( 1 + \frac{1}{8r} \right)^{{2r}}. 
\]
We have $\left( 1 + \frac{1}{8r} \right)^2 = 1 + \frac{1}{4r} + \frac{1}{64r^2} \leq 1 + \frac{1}{3r}$ while $\frac{r-1}{r-3} < \frac{5}{4}$ for any $r \geq 13$ hence it is sufficient to show that 
\[
\left( \frac{r-1}{r-3} \right)^{r} > 5  \left( 1 + \frac{1}{3r} \right)^{r} \quad \text{ for $r \geq 13$}. 
\]
The right-hand side is an increasing function whose limit is given by $5 e^{1/3} \approx 6.978$. In fact, $e^x \geq \left( 1 + \frac{x}{n} \right)^n$ for all $x \geq 0$ and $n > 0$. 
The left-hand side is decreasing for $r\geq 4$. In fact, its derivative is given by 
\[
\frac{1}{r-3}
\left(\frac{r - 1}{r - 3}\right)^{r-1}  
\underbrace{\left( (r-1) \ln\left(\frac{r - 1}{r - 3}\right)
-  \frac{2 r}{(r - 3)}
\right)}_{h(r)} . 
\]
while using $\ln(x) \leq x - 1$ we have $\ln\left(\frac{r - 1}{r - 3}\right) \leq \frac{r - 1}{r - 3} - 1 = \frac{2}{r - 3}$ hence 
\[
h(r) \leq (r-1) \frac{2}{r - 3}
-  \frac{2 r}{r - 3}  = \frac{-2}{r - 3} \leq 0 \quad  \text{ for any } r \geq 4. 
\]
Finally, the limit of the left-hand side for $r \rightarrow +\infty$ is given by 
\begin{align*}
\lim_{r \rightarrow +\infty} \left( \frac{r-1}{r-3} \right)^{r} 
= \lim_{r \rightarrow +\infty} \left( 1 + \frac{2}{r-3} \right)^{r} 
& = \lim_{r \rightarrow +\infty} \left( \frac{r-1}{r-3} \right)^3 \lim_{r \rightarrow +\infty} \left( 1 + \frac{2}{r-3} \right)^{r-3} \\
& = e^2 \approx 7.389 > 5e^{1/3}. 
\end{align*}
\end{proof}

\end{document}